\newtheorem{theorem}{Theorem}[section]
\newtheorem{proposition}[theorem]{Proposition}
\newcommand{\method}{\textsc{CByG}\xspace}
\def\eqref#1{equation~\ref{#1}}
\def\1{\bm{1}}
\DeclareMathAlphabet{\mathsfit}{\encodingdefault}{\sfdefault}{m}{sl}
\SetMathAlphabet{\mathsfit}{bold}{\encodingdefault}{\sfdefault}{bx}{n}
\DeclareRobustCommand\onedot{\futurelet\@let@token\@onedot}
\def\@onedot{\ifx\@let@token.\else.\null\fi\xspace}
\title{Controllable 3D Molecular Generation for Structure-Based Drug Design Through Bayesian Flow Networks and Gradient Integration}
\author{
  Seungyeon Choi$^{* 1}$, Hwanhee Kim$^1$, Chihyun Park$^{2,3}$, Dahyeon Lee$^3$, Seungyong Lee$^1$ \\
  \textbf{Yoonju Kim$^1$, Hyoungjoon Park$^1$, Sein Kwon$^1$, Youngwan Jo$^1$,  Sanghyun Park$^{1 \dagger}$}\\
  $^{1}$ Yonei University $^{2}$ UBLBio  $^{3}$ Kangwon National University \\
  \texttt{\{tmddus1553,sanghyun\}@yonsei.ac.kr}
}
\begin{document}
\maketitle
\begin{abstract}
\looseness=-11

Recent advances in Structure-based Drug Design (SBDD) have leveraged generative models for 3D molecular generation, predominantly evaluating model performance by binding affinity to target proteins. However, practical drug discovery necessitates high binding affinity along with synthetic feasibility and selectivity, critical properties that were largely neglected in previous evaluations. To address this gap, we identify fundamental limitations of conventional diffusion-based generative models in effectively guiding molecule generation toward these diverse pharmacological properties. We propose \method, a novel framework extending Bayesian Flow Network into a gradient-based conditional generative model that robustly integrates property-specific guidance. Additionally, we introduce a comprehensive evaluation scheme incorporating practical benchmarks for binding affinity, synthetic feasibility, and selectivity, overcoming the limitations of conventional evaluation methods. Extensive experiments demonstrate that our proposed \method framework significantly outperforms baseline models across multiple essential evaluation criteria, highlighting its effectiveness and practicality for real-world drug discovery applications.
\end{abstract}

\section{Introduction}
In Structure-based Drug Design (SBDD), generative models capable of designing 3D molecules that selectively bind target proteins have emerged as essential tools in drug discovery \citep{bai2024geometric, zhang2023systematic}. Initial approaches primarily utilized voxel-grid representations \citep{o20233d}, evolving through autoregressive architectures \citep{peng2022pocket2mol, shen2023tacogfn}, to recent high-performing non-autoregressive, diffusion-based methods \citep{guan20233d, guan2024decompdiff, gu2024aligning}. Despite significant progress, practical drug discovery requires more than just binding affinity; viable drug candidates must also satisfy critical pharmacological constraints such as synthetic feasibility and selectivity \citep{klebe2000recent, huggins2012rational}. To address this, recent diffusion-based studies have incorporated gradient-based guidance strategies for enhanced property control during generation \citep{dhariwal2021diffusion, guo2024gradient}.

However, it remains unclear whether diffusion-based guidance represents an optimal and practically reliable strategy for controllable molecule generation. Moreover, widely used evaluation metrics in existing SBDD research have rarely been rigorously examined regarding their adequacy in assessing realistic drug candidate properties. In this study, we systematically investigate the theoretical and practical limitations of diffusion-based guidance strategies and conventional evaluation metrics. To overcome these problems, we introduce \method (\underline{C}ontrollable \underline{B}a\underline{y}esian Flow Network with Integrated \underline{G}uidance), an extended Bayesian Flow Network \citep{graves2023bayesian} framework utilizing gradient-based conditional generation, accompanied by comprehensive evaluation benchmarks targeting synthetic feasibility, selectivity, and binding affinity. Our contributions provide essential insights and methodological advancements toward more robust, practical, and effective AI-driven molecular generation for real-world drug discovery applications.

\section{Rethinking of 3D Molecule Modeling in SBDD}
\label{sec:Rethinking}
\begin{wrapfigure}{r}{0.5\textwidth}
\centering
\vspace{-5pt}
\includegraphics[width=0.5\textwidth]{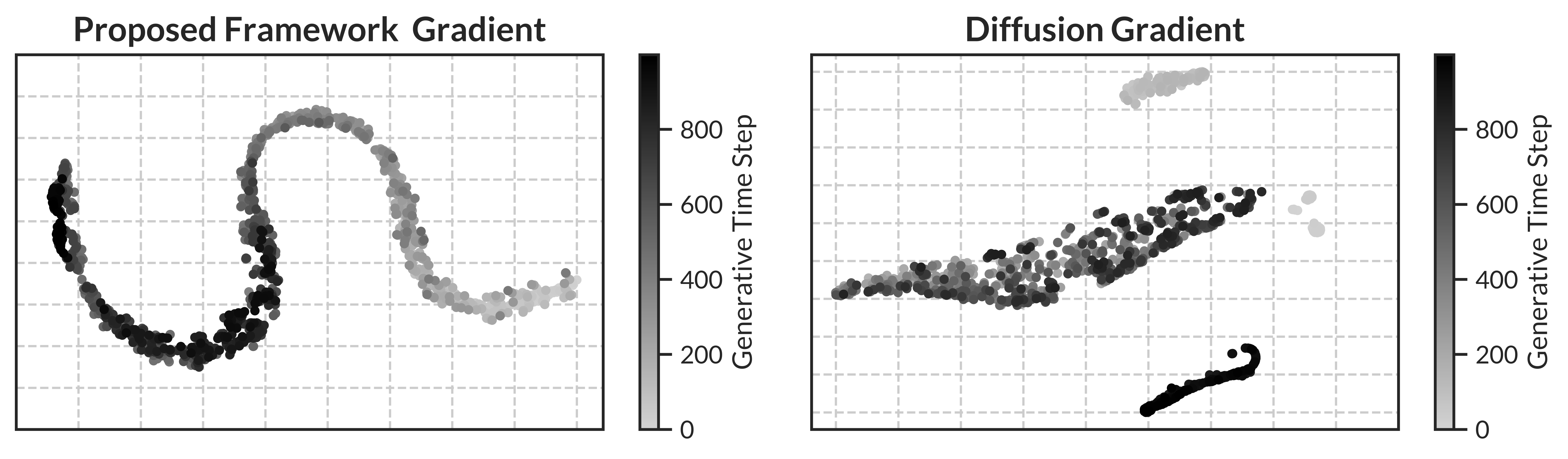}
\setlength{\abovecaptionskip}{-10pt}
\captionsetup{font=small}
\caption{Gradient trajectory for target properties throughout the Generative Process for Proposed model and diffusion model. Further Interpretation on these results is provided in Section \ref{subsec:rq3}.
}
\label{fig:grad_scatter}
\vspace{-10pt}
\end{wrapfigure}
\textbf{Problem of guidance in hybrid modalities.}  In SBDD, generating molecules with desired properties that bind to target proteins is a primary objective. To this end, employing guidance strategies with external predictors within diffusion models is a considerable approach for property-driven sampling \citep{guo2024gradient}. However, the hybrid nature of 3D molecular data, which comprises continuous Cartesian coordinates (typically modeled with Gaussian distributions) and categorical atom types (modeled with categorical distributions), presents significant limitations to the conventional application of guidance.
The primary challenges are as follows:
\vspace{-3pt}
\begin{enumerate}[leftmargin=1em]
    \item  \textbf{\textit{Difficulty in Capturing Interactions:}} As continuous coordinates and categorical atom types are sampled from fundamentally different types of distributions, the guidance mechanism may fail to accurately reflect the interactions between them \citep{peng2023moldiff,song2023unified}. This can lead to a loss of crucial chemical context during the generation process.
    \vspace{-3pt}
    \item \textbf{\textit{Ineffective and Unstable Categorical Variable Guidance:}} Applying gradient-based guidance to categorical variables in diffusion models is inherently problematic. The argmax operation in the reverse process \citep{gu2024aligning,guan20233d,guan2024decompdiff}, coupled with the non-differentiability of categorical distributions, renders guidance signals either ineffective or a source of instability, while workaround strategies often introduce unnatural representations and increase model complexity, as shown in Figure \ref{fig:grad_scatter}.
    \vspace{-3pt}
    \item \textbf{\textit{Loss of Chemical and Structural Validity:}} Direct injection of gradients calculated during the denoising process into the sample space can readily compromise the chemical and structural validity of 3D molecules, which are highly sensitive to numerical perturbations. This complicates property control and can result in the generation of unstable molecular structures.
    \vspace{-3pt}
\end{enumerate}
Recent studies have begun exploring Bayesian Flow Network (BFN) \citep{graves2023bayesian} as a solution to these issues in 3D molecular generation \citep{song2023unified,qu2024molcraft}; however, these typically treat BFN as independent generative frameworks distinct from diffusion models, leaving gradient-based conditional generation (guidance) underexplored both theoretically and practically. Therefore, systematic theoretical clarification and practical expansion of gradient guidance mechanisms within BFN, particularly in comparison with well-established diffusion-based guidance strategies, remain essential research directions. A detailed discussion on this point is provided in the Appendix \ref{app:sec:Rethinking}.

\setlength{\intextsep}{10pt} 
\begin{wrapfigure}{r}{0.4\textwidth}

\centering
\vspace{-5pt} 
\includegraphics[width=0.4\textwidth]{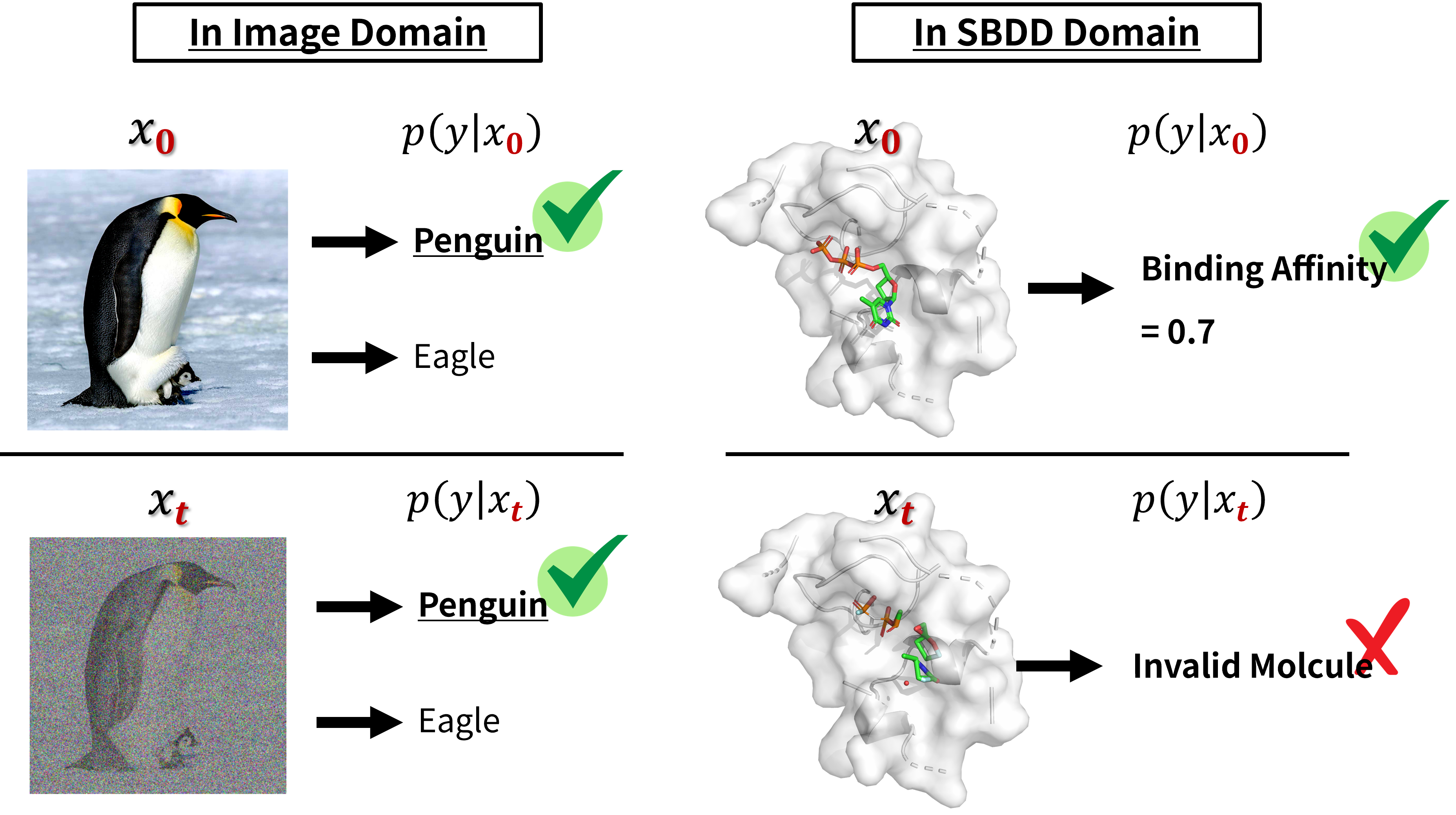}
\setlength{\abovecaptionskip}{-10pt} 
\setlength{\belowcaptionskip}{-5pt} 
\captionsetup{font=small}
\caption{A comparative illustration of attribute conditioning using $p(\mathtt{l}\mid \mathbf{x}_t)$ versus $p(\mathtt{l}\mid \mathbf{x}_0)$ in the image and SBDD domains.}
\label{fig:penguin}

\end{wrapfigure}
\textbf{Necessity of Posterior Sampling in Guidance.}
In gradient-based generative frameworks such as diffusion models, conditional generation typically leverages a posterior conditioned on labels (attributes) $\mathtt{l}$, known as the conditional score function $\nabla_{\mathbf{x}_t} p(\mathtt{l}\mid \mathbf{x}_t)$, which is learned via a dedicated neural network \citep{dhariwal2021diffusion,song2020score,guo2024gradient}. While effective in image domains, where intermediate noisy states remain semantically informative, this approach is unsuitable for 3D molecular generation because noisy intermediate structures lack chemical validity and making reliable attribute prediction difficult, as shown in Figure \ref{fig:penguin}. Recent studies have proposed posterior sampling methods utilizing predicted final states ($\mathbf{x}_0$) to address this challenge \citep{chungdiffusion,han2023training}; however, these methods focus primarily on general conditional molecular generation tasks, not specifically on structure-based drug design (SBDD). Furthermore, existing methods unnaturally discretize categorical atom types into continuous variables and neglect inherent prediction uncertainties of the final state. Thus, further research and methodological refinements are required to effectively employ $\mathbf{x}_0$-based guidance for SBDD tasks.
A detailed discussion on this subject is provided in the Appendix \ref{app:sec:Rethinking}.

\textbf{Need for Selectivity dataset and various evaluation metrics}.
Previous research in SBDD has predominantly relied on binding affinity measurements, typically using AutoDock Vina \citep{eberhardt2021autodock}, to evaluate generated 3D molecules, introducing potential evaluation biases. To enhance reliability, incorporating diverse docking algorithms for affinity evaluation is essential.

Another key consideration is Synthetic Accessibility (SA) scores \citep{ertl2009estimation}, combining structural complexity and fragment contributions into a single numerical value between 0 and 1, have frequently been employed to assess synthetic feasibility. However, high SA scores often do not guarantee practical synthetic routes, highlighting a critical gap in accurately evaluating real-world synthesizability.

Moreover, selectivity (ensuring that molecules bind specifically to target proteins without off-target interactions) is equally important as affinity \citep{wermuth2004selective,huggins2012rational}. Poor selectivity can cause unwanted side effects, and while recent studies have proposed diffusion-based guidance methods \citep{gao2024rethinking,choi2024pidiff}, these rely heavily on pretrained classifiers predicting true binding molecules. The widely used CrossDocked2020 \citep{francoeur2020three} dataset, initially intended for general docking research, is unsuitable for selectivity evaluations without substantial additional docking computations. Furthermore, unclear criteria for identifying true binders and significant data imbalance hinder obtaining reliable guidance signals. Consequently, establishing biologically relevant benchmark datasets and developing effective controllable generation strategies for selectivity is urgently required.
A detailed discussion on this subject is provided in the Appendix \ref{app:sec:Rethinking}.

The \textbf{Contributions} of this research addressing the aforementioned points are as follows:
\vspace{-3pt}
\begin{itemize}[leftmargin=1em]
    \item  We introduce a novel approach that integrates BFN with diffusion models from a guidance perspective to circumvent the limitations of diffusion guidance in SBDD. This approach explicitly formulates a gradient-based guidance mechanism within a Bayesian update process and rigorously establishes its theoretical foundations.
    \vspace{-3pt}
    \item We provide a comprehensive analysis of how injecting guidance into generative models affects posterior sampling–based predictions and uncertainty in SBDD. This study offers new insights into the interplay between guided generation and uncertainty quantification.
    \vspace{-3pt}
    \item We revisit the limitations of existing evaluation metrics in SBDD and propose a new set of essential metrics for evaluating practical molecular generative models, including binding affinity, Synthetic Accessibility (SA), and selectivity.
    \vspace{-3pt}
    \item Through extensive and comprehensive experiments, we demonstrate that the proposed framework \method outperforms existing baseline models by a substantial margin. Notably, it achieves state-of-the-art performance under both conventional and newly introduced evaluation criteria.
\end{itemize}
\vspace{-3pt}
\section{Background}
A comprehensive overview of related work pertinent to this study is provided in the Appendix \ref{app:sec:rw}.

\subsection{Bayesian Flow Network in 3D Molecule Modeling}

Bayesian Flow Network (BFN) \citep{graves2023bayesian} synthesize data through iterative Bayesian refinements in a unified parameter space, distinct from diffusion models which operate directly in data space. Unlike conventional methods that separately estimate distributions for continuous atomic coordinates and categorical atom types, BFN jointly model both modalities within a single, tractable distribution (typically Gaussian). Formally, the data distribution $p_{\phi}(\mathbf{m})$ is approximated by progressively refining parameters $\boldsymbol{\theta}$ via Bayesian updates:
\begin{align}
\label{eq:px derivate} 
p_\phi(\mathbf{m}) = \int p_\phi(\mathbf{m}\mid\boldsymbol{\theta}_n)p(\boldsymbol{\theta}_0)\prod_{i=1}^n p_{\mathtt{U}}(\boldsymbol{\theta}_i\mid\boldsymbol{\theta}_{i-1};\alpha_i)d\boldsymbol{\theta}_{1:n}
\end{align}

Each iterative refinement involves three transition kernels:

\textbf{Sender distribution} $p_\mathtt{S}(\mathbf{y}_i\mid\mathbf{m};\alpha_i)$: injects controlled noise into the molecular representation, enabling gradual data transformations.

\textbf{Output distribution} $p_\mathtt{O}(\hat{\mathbf{m}}\mid\boldsymbol{\theta}_{i-1};t_i)$: uses a neural network $\Psi$ to reconstruct molecular features based on previous parameter states, ensuring coherence despite noise.

\textbf{Receiver distribution} $p_\mathtt{R}(\mathbf{y}_i\mid\boldsymbol{\theta}_{i-1};t_i,\alpha_i)$: predicts noisy observations by marginalizing over the output distribution.

Bayesian updates are computed by incorporating noisy observations $\mathbf{y}_i$ into parameters $\boldsymbol{\theta}_i$:
\vspace{-1pt}
\begin{align}
\label{eq:bayesian update} 
p_{\mathtt{U}}(\boldsymbol{\theta}_i\mid\boldsymbol{\theta}_{i-1};\alpha_i)=\mathbb{E}_{p_{\mathtt{O}}(\mathbf{m}\mid\theta_{i-1};t_i)}\left[\mathbb{E}_{p_{\mathtt{S}}(\mathbf{y}_i\mid\mathbf{m};\alpha_i)}[\delta(\boldsymbol{\theta}_i - h(\boldsymbol{\theta}_{i-1},\mathbf{y}_i,\alpha_i))]\right]
\end{align}

BFN ultimately optimize the KL divergence between sender and receiver distributions, progressively refining parameters toward accurate data approximations.
Therefore, this framework uniformly handle discrete and continuous data through a modality-agnostic framework operating entirely on distribution parameters, thus avoiding discontinuities found in discrete diffusion models. After training, sample generation involves iteratively refining parameters $\boldsymbol{\theta}_i$:
1) Draw intermediate sample $\mathbf{m}_i' \sim p_\mathtt{O}(\cdot\mid\boldsymbol{\theta}_{i-1},t_i)$.
2) Obtain noisy observation $\mathbf{y}_i \sim p_\mathtt{S}(\cdot\mid\mathbf{m}_i',\alpha_i)$.
3) Update parameters via Bayesian inference: $\boldsymbol{\theta}_i = h(\boldsymbol{\theta}_{i-1},\mathbf{y}_i,\alpha_i)$.

This iterative Bayesian refinement yields progressively accurate parameter representations, enabling coherent and robust data generation.
For a detailed explanation of this part, we recommend referring to the original paper \citep{graves2023bayesian} that proposed this model, as well as the Appendix \ref{app:sec:Detail description bfn} provided in this paper.

\section{Unified Framework of Bayesian Flow Network and Score Based Diffusion}
\label{sec:BFN2grad}

\subsection{Overview}
Building on the previously outlined theoretical foundations of score-based models \citep{guo2024gradient}, particularly the concept of guidance, we now discuss how this framework can be integrated into the inference process of BFN across different data modalities.

\subsection{BFN for Continuous Variables from Score Gradient}
When modeling continuous variables (e.g., atomic coordinates) using a Gaussian distribution $\mathcal{N}\left(\mathbf{X} \mid \boldsymbol{\theta}^{\mathbf{x}}, \rho^{-1} \mathbf{I} \right)$, the parameters $\boldsymbol{\theta}$ that BFN seeks to optimize can be defined as the mean and covariance matrices of the distribution: $\boldsymbol{\theta} = \{\boldsymbol{\theta}^{\mathbf{x}}, \rho\}$. 
The bayesian update function of update distribution (\Cref{eq:bayesian update}) whose fundamental objective is to find the optimal $\theta$ that best describes the data distribution, is defined as below:
\begin{align}
h\left(\left\{\boldsymbol{\theta}^{\mathbf{x}}_{i-1}, \rho_{i-1}\right\}, \mathbf{y}, \alpha_i\right)=\left\{\boldsymbol{\theta}^{\mathbf{x}}_i, \rho_i\right\}, \quad \text { where } \quad \rho_i=\rho_{i-1}+\alpha_i, \,\, \boldsymbol{\theta}^{\mathbf{x}}_i=\frac{\boldsymbol{\theta}^{\mathbf{x}}_{i-1} \rho_{i-1}+\mathbf{y} \alpha_i}{\rho_i}
\end{align}
\begin{proposition}
\label{Proposition1}
    According to Graves \citep{graves2023bayesian}, when the sender distribution for sampling $\boldsymbol{\theta}^{\mathbf{x}}_{i}$ is defined as $p_{\mathtt{S}}(\mathbf{y} \mid \mathbf{X} ; \alpha \mathbf{I})=\mathcal{N}\left(\mathbf{X}, \alpha^{-1} \mathbf{I}\right)$, the updated $\boldsymbol{\theta}^{\mathbf{x}}_{i}$ resulting from the update function can be expressed in the following normal distribution form: 
    \begin{align}
    \label{mu_bayes_update}
    \boldsymbol{\theta}^{\mathbf{x}}_i \sim \mathcal{N}\left(\frac{\boldsymbol{\theta}^{\mathbf{x}}_{i-1} \rho_{i-1} + \alpha \mathbf{x}}{\rho_i}, \frac{\alpha_{i}}{\rho_i^2} \mathbf{I}\right)
    \end{align}
    By applying the reparameterization trick to the normal distribution above, it can be reformulated as the following linear Gaussian model: 
    $\boldsymbol{\theta}^{\mathbf{x}}_i=\frac{\alpha \mathbf{X}+\boldsymbol{\theta}^{\mathbf{x}}_{i-1} \cdot \rho_{i-1}}{\rho_i}+\frac{\sqrt{\alpha_{i}}}{\rho_i} \cdot \mathbf{z} \,\,\,\, \text{for} \,\,\mathbf{z} \sim \mathcal{N}(\mathbf{0}, \mathbf{I}) $.
    The transformation from above linear gaussian model to the following equivalent formulation 
    \begin{align}
    \label{x_bayes_update}
    \mathbf{x} \sim \mathcal{N}\left(\frac{1}{\alpha}\left(\rho_i \boldsymbol{\theta}^{\mathbf{x}}_i-\boldsymbol{\theta}^{\mathbf{x}}_{i-1} \cdot \rho_{i-1}\right), \frac{1}{\alpha} \mathbf{I}\right)
    \end{align}
    is obtained by rearranging the terms to explicitly express $\mathbf{x}$ in terms of the updated parameters.
\end{proposition}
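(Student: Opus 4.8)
The plan is to establish the three displayed expressions as a chain of equivalent Gaussian representations, each reached by a standard manipulation, so that the argument reduces to bookkeeping with affine transformations of normal random variables. Throughout I identify the clean datum $\mathbf{x}$ (written $\mathbf{X}$ in the sender distribution) and set $\alpha = \alpha_i$ to match the update function.

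First I would substitute the sender law $p_{\mathtt{S}}(\mathbf{y}\mid\mathbf{x};\alpha\mathbf{I}) = \mathcal{N}(\mathbf{x}, \alpha^{-1}\mathbf{I})$ into the closed-form Bayesian update $\boldsymbol{\theta}^{\mathbf{x}}_i = (\boldsymbol{\theta}^{\mathbf{x}}_{i-1}\rho_{i-1} + \mathbf{y}\alpha_i)/\rho_i$. Because this map is affine in $\mathbf{y}$ and $\mathbf{y}$ is Gaussian, $\boldsymbol{\theta}^{\mathbf{x}}_i$ is itself Gaussian and its law is fixed by its first two moments. Taking expectations gives the mean $(\boldsymbol{\theta}^{\mathbf{x}}_{i-1}\rho_{i-1} + \alpha\mathbf{x})/\rho_i$, and propagating the covariance through the scalar factor $\alpha_i/\rho_i$ gives $(\alpha_i/\rho_i)^2\,\Var(\mathbf{y}) = (\alpha_i/\rho_i)^2\alpha^{-1}\mathbf{I} = (\alpha_i/\rho_i^2)\mathbf{I}$, which is exactly \Cref{mu_bayes_update}.

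Next I would apply the reparameterization trick: writing $\boldsymbol{\theta}^{\mathbf{x}}_i = \mu + \sigma\mathbf{z}$ with $\mathbf{z}\sim\mathcal{N}(\mathbf{0},\mathbf{I})$, $\mu$ the mean just computed, and $\sigma = \sqrt{\alpha_i}/\rho_i$ the square root of the scalar variance, reproduces the stated linear Gaussian model verbatim. The last step is purely algebraic: I solve that identity for $\mathbf{x}$. Multiplying through by $\rho_i$, isolating the $\alpha\mathbf{x}$ term, and dividing by $\alpha$ gives $\mathbf{x} = \alpha^{-1}(\rho_i\boldsymbol{\theta}^{\mathbf{x}}_i - \boldsymbol{\theta}^{\mathbf{x}}_{i-1}\rho_{i-1}) - (\sqrt{\alpha_i}/\alpha)\mathbf{z}$. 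Reading off the mean together with the coefficient $\sqrt{\alpha_i}/\alpha = 1/\sqrt{\alpha}$ of the standard normal $\mathbf{z}$ gives variance $\alpha^{-1}\mathbf{I}$, which is \Cref{x_bayes_update}.

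The computations are routine, so the only genuinely delicate point is conceptual rather than technical: the change of perspective in the final step. In the forward representation $\mathbf{x}$ is the fixed datum and $\boldsymbol{\theta}^{\mathbf{x}}_i$ is random, whereas after inversion I treat $\mathbf{z}$ as the sole source of randomness and regard $\mathbf{x}$ as the quantity whose conditional law given the parameter states is being described. I would therefore be careful to state that \Cref{x_bayes_update} is the conditional distribution of $\mathbf{x}$ induced by the Gaussian noise $\mathbf{z}$, since it is exactly this denoising-style reinterpretation --- expressing the clean datum as a linear readout of successive parameter states corrupted by noise --- that exposes the score/diffusion structure the remainder of the section relies on.
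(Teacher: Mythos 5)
Your proposal is correct and takes essentially the same route as the paper: substitute the Gaussian sender into the affine Bayesian update to obtain \Cref{mu_bayes_update}, apply the reparameterization trick, and algebraically invert the resulting linear Gaussian model to read off \Cref{x_bayes_update}. Your explicit moment propagation (with the identification $\alpha=\alpha_i$) and your caveat that the last step is a denoising-style reinterpretation of the noise $\mathbf{z}$ as inducing a conditional law for $\mathbf{x}$ simply make rigorous what the paper compresses into ``rearranging the terms.''
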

The resulting distribution allows us to approximate the mean of $\mathbf{x}$ as follows: $\frac{1}{\alpha}\left(\rho_i \boldsymbol{\theta}^{\mathbf{x}}_i-\boldsymbol{\theta}^{\mathbf{x}}_{i-1} \cdot \rho_{i-1}\right)$. Furthermore, this can be alternatively interpreted using Tweedie’s formula from a different perspective.

\textbf{Definition 4.1}(Tweedie’s Formula \citep{efron2011tweedie,kim2021noise2score})\textbf{.} 
\textit{Let $z$ be a Gaussian random variable following the distribution $z \sim \mathcal{N}(z; \mu_z, \Sigma_z)$. Then, Tweedie’s formula states that:}
\begin{align}
\mathbb{E}[\mu_\mathbf{x} | \mathbf{x}] = \mathbf{x} + \Sigma_\mathbf{x} \nabla_\mathbf{x} \log p(\mathbf{x})
\end{align}
\textit{where $p(\mathbf{X})$ denotes the marginal distribution of $\mathbf{x}$: $f(\mathbf{x})=\int_{-\infty}^{\infty} \phi_\sigma(\mathbf{x}-\mu) g(\mu) d \mu ,\,\text{here}\,  \phi_\sigma(\mu)=\left(2 \pi \sigma^2\right)^{-1 / 2} \exp \left\{-z^2 / \sigma^2\right\}$}

Based on the principles outlined in Definition 5.1, the corresponding relationship for the normal distribution in \Cref{x_bayes_update} can be formally established as follows: $\frac{1}{\alpha}\left(\rho_i \boldsymbol{\theta}^{\mathbf{x}}_i-\boldsymbol{\theta}^{\mathbf{x}}_{i-1} \cdot \rho_{i-1}\right) = \mathbf{x}+\frac{1}{\alpha} \cdot \nabla_\mathbf{x} \log p(\mathbf{x})$. The derived formula enables us to reinterpret the Bayesian update function in \Cref{mu_bayes_update} from the perspective of the score function.
\begin{align}
\label{eq mu2gradient}
h\left(\boldsymbol{\theta}^{\mathbf{x}}_{i-1}, \mathbf{y}_i, \alpha_i\right)&=\frac{\boldsymbol{\theta}^{\mathbf{x}}_{i-1} \rho_{i-1}+\mathbf{y} \alpha_i}{\rho_i}=\frac{\alpha \mathbf{x}+\boldsymbol{\theta}^{\mathbf{x}}_{i-1} \cdot \rho_{i-1}}{\rho_i}+\frac{\sqrt{\alpha_{i}}}{\rho_i} \cdot \boldsymbol{\epsilon}\nonumber \\
&=\frac{\alpha}{\rho_i} \cdot \mathbf{x}+\frac{\rho_{i-1}}{\rho_i} \cdot \boldsymbol{\theta}^{\mathbf{x}}_{i-1}+\frac{1}{\rho_i} \nabla \mathbf{x} \log p(\mathbf{x})
\end{align}

Using the reparameterized Bayesian update function derived above, we can rewrite the Bayesian update distribution $p_{\mathtt{U}}$ from \Cref{eq:bayesian update} as follow:
\begin{align}
&p_{\mathtt{U}}\left(\boldsymbol{\theta}^{\mathbf{x}}_i \mid \boldsymbol{\theta}^{\mathbf{x}}_{i-1} ; \alpha_i\right)
=\underset{{p_{\mathtt{O}}\left(\mathbf{m} \mid \theta_{i-1} ; t_i\right)}}{\mathbb{E}} \left[\underset{p_{\mathtt{S}}(\mathbf{y}_i \mid \mathbf{m} ; \alpha_i)}{\mathbb{E}} \left[\zeta\left(\theta_i \mid \boldsymbol{\theta}_{i-1}, \mathbf{y}_i, \alpha_i\right)\right]\right] \nonumber
\\&\text{, where} \quad \zeta\left(\boldsymbol{\theta}^{\mathbf{x}}_i \mid \boldsymbol{\theta}^{\mathbf{x}}_{i-1}, \mathbf{y}_i, \alpha_i\right):\boldsymbol{\theta}^{\mathbf{x}}_i \leftarrow \frac{\alpha}{\rho_i} \cdot \mathbf{x}+\frac{\rho_{i-1}}{\rho_i} \cdot \boldsymbol{\theta}^{\mathbf{x}}_{i-1}+\frac{1}{\rho_i} \nabla \mathbf{x} \log p(\mathbf{x})
\end{align}
where we define a compact notation for convenience: $\delta\left(\boldsymbol{\theta}^{\mathbf{x}}_i-h\left(\boldsymbol{\theta}^{\mathbf{x}}_{i-1}, \mathbf{y}_i, \alpha_i\right)\right) = \zeta\left(\boldsymbol{\theta}^{\mathbf{x}}_i \mid \boldsymbol{\theta}^{\mathbf{x}}_{i-1}, \mathbf{y}_i, \alpha_i\right)$
Interestingly, the above derivation demonstrates that we can update the parameter $\boldsymbol{\theta}$ by means of the gradient taken with respect to $\mathbf{x}$.

\subsection{BFN for Discrete Variables from Score Gradient}
Similar to the modeling of continuous variables, and recalling that the fundamental objective of BFN is to identify parameters that effectively encapsulate the data, this section explores whether the Bayesian update process for discrete types can be reinterpreted in gradient form. According to Graves, the Bayesian update function for discrete variables is defined as follows: $h\left(\boldsymbol{\theta}_{i-1}, \mathbf{y}, \alpha\right) = \boldsymbol{\theta}_{i}=\frac{e^{\mathbf{y}} \boldsymbol{\theta}_{i-1}}{\sum_{k=1}^K e^{\mathbf{y}_k\left(\boldsymbol{\theta}_{i-1}\right)_k}}$.
Furthermore, the sender distribution, which governs the sampling of the perturbed state $\mathbf{y}$ of the discrete variable, is given by:
\begin{align}
p_{\mathtt{S}}\left(\mathbf{y} \mid \mathbf{x} ; \alpha\right)=\mathcal{N}\left(\mathbf{y} \mid \alpha\left(K \mathbf{e}_{\mathbf{x}}-\mathbf{1}\right), \alpha K \cdot \mathbf{I}\right)
\end{align}
where $\mathbf{1}$ is a vector of ones, $\mathbf{I}$ is the identity matrix, and $\mathbf{e}_i \in \mathbb{R}^K$  is a vector defined as the projection from the class index $i$ to a length K one-hot vector (details provided by \citep{graves2023bayesian}).
\begin{proposition}
\label{Proposition2}
Similar to Proposition \ref{Proposition1}, we can reformulate the relationship between $\mathbf{e}_{\mathbf{x}}$ and $\mathbf{y}$ using the reparameterization trick, as expressed follow: $\mathbf{e}_\mathbf{x} \sim \mathcal{N}\left(\frac{1}{\alpha K} \cdot \mathbf{y} + \frac{1}{K}, \frac{1}{\alpha K} \boldsymbol{I}\right)$.
The essence of the Bayesian update function for discrete variables lies in applying a softmax operation to the previous timestep parameter $\boldsymbol{\theta}$ and the perturbed variable $\mathbf{y}$. Notably, this allows us to express $\mathbf{y}$ in a gradient form, enabling a formal characterization of the relationship between the update function and the score gradient.
\end{proposition}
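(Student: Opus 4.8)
The plan is to mirror Proposition~\ref{Proposition1} step for step, now applied to the categorical sender distribution. First I would write $p_{\mathtt{S}}(\mathbf{y}\mid\mathbf{x};\alpha)=\mathcal{N}(\mathbf{y}\mid\alpha(K\mathbf{e}_{\mathbf{x}}-\mathbf{1}),\alpha K\mathbf{I})$ in reparameterized form, $\mathbf{y}=\alpha(K\mathbf{e}_{\mathbf{x}}-\mathbf{1})+\sqrt{\alpha K}\,\mathbf{z}$ with $\mathbf{z}\sim\mathcal{N}(\mathbf{0},\mathbf{I})$. Since this is an affine relation in $\mathbf{e}_{\mathbf{x}}$, it can be inverted directly: solving gives $\mathbf{e}_{\mathbf{x}}=\frac{1}{\alpha K}\mathbf{y}+\frac{1}{K}\mathbf{1}-\frac{1}{\sqrt{\alpha K}}\mathbf{z}$. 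Because $-\frac{1}{\sqrt{\alpha K}}\mathbf{z}$ is centered Gaussian with covariance $\frac{1}{\alpha K}\mathbf{I}$, reading off the mean and covariance yields the stated law $\mathbf{e}_{\mathbf{x}}\sim\mathcal{N}(\frac{1}{\alpha K}\mathbf{y}+\frac{1}{K},\frac{1}{\alpha K}\mathbf{I})$, the exact discrete analogue of \Cref{x_bayes_update}. This algebraic inversion is the part I am most confident about, since it is a routine manipulation of an affine-Gaussian relation.

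For the gradient characterization I would apply Tweedie's formula (Definition~4.1) to this Gaussian observation model, exactly as the continuous case applied it to \Cref{x_bayes_update}. Identifying $\frac{1}{\alpha K}\mathbf{y}+\frac{1}{K}\mathbf{1}$ as the conditional mean and $\frac{1}{\alpha K}\mathbf{I}$ as the covariance, Tweedie gives $\frac{1}{\alpha K}\mathbf{y}+\frac{1}{K}\mathbf{1}=\mathbf{e}_{\mathbf{x}}+\frac{1}{\alpha K}\nabla_{\mathbf{e}_{\mathbf{x}}}\log p(\mathbf{e}_{\mathbf{x}})$. Rearranging isolates the perturbed variable in gradient form, $\mathbf{y}=\alpha(K\mathbf{e}_{\mathbf{x}}-\mathbf{1})+\nabla_{\mathbf{e}_{\mathbf{x}}}\log p(\mathbf{e}_{\mathbf{x}})$, which pleasingly recovers the sender mean plus a score term. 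Substituting this expression into the softmax update $\boldsymbol{\theta}_i=e^{\mathbf{y}}\boldsymbol{\theta}_{i-1}/\sum_{k=1}^{K}e^{\mathbf{y}_k(\boldsymbol{\theta}_{i-1})_k}$ then rewrites the discrete Bayesian update purely through the score gradient, which is the characterization the proposition asserts.

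The main obstacle I anticipate is justifying Tweedie's formula in this discrete setting: $\mathbf{e}_{\mathbf{x}}$ is a one-hot vector supported on the vertices of the probability simplex, not a genuinely continuous quantity, so the marginal $p(\mathbf{e}_{\mathbf{x}})$ and its score must be interpreted through the same continuous relaxation that the reparameterization already tacitly adopts. A second difficulty is that, unlike the continuous case where the score entered additively into a linear update, here $\mathbf{y}$ enters nonlinearly through the exponentiation and normalization of the softmax; I would therefore need to check that the gradient-form substitution remains well-defined after exponentiation and that the resulting expression is still a valid update on the simplex. Reconciling this discrete-to-continuous relaxation with the exactness of the earlier algebraic inversion is where I expect the argument to require the most care.
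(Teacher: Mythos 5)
Your proposal is correct and follows essentially the same route as the paper: reparameterize the categorical sender distribution, invert the affine-Gaussian relation to obtain $\mathbf{e}_{\mathbf{x}} \sim \mathcal{N}\left(\frac{1}{\alpha K}\mathbf{y} + \frac{1}{K}, \frac{1}{\alpha K}\mathbf{I}\right)$, apply Tweedie's formula to express $\mathbf{y} = \alpha\left(K\mathbf{e}_{\mathbf{x}} - \mathbf{1}\right) + \nabla_{\mathbf{e}_{\mathbf{x}}}\log p(\mathbf{e}_{\mathbf{x}})$, and substitute into the softmax update. The difficulty you flag about the one-hot support of $\mathbf{e}_{\mathbf{x}}$ is handled by the paper exactly as you propose, via a continuous (Gumbel-softmax) relaxation under which the sender conditional is genuinely Gaussian and Tweedie's formula applies.
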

Similarly, applying Tweedie’s formula to the normal distribution from which $\mathbf{e}_\mathbf{x}$ is sampled, as derived above, yields the following relationship for the computed mean: $\frac{1}{\alpha K} \cdot \mathbf{y} + \frac{1}{K} = \mathbf{e}_\mathbf{x} + \frac{1}{\alpha K} \nabla_{\mathbf{e}_\mathbf{x}} \log p(\mathbf{e}_\mathbf{x})$. Utilizing this result, the update function for discrete variables can be extended into the following gradient form:
\begin{align}
\label{eq discrete gradient} 
&h\left(\boldsymbol{\theta}_{i-1}, \mathbf{y}, \alpha\right) =\frac{e^{\mathbf{y}} \boldsymbol{\theta}_{i-1}}{\sum_{k=1}^K e^{\mathbf{y}_k} \left(\boldsymbol{\theta}_{i-1}\right)_k} 
= \textrm{Softmax}(e^\mathbf{y}\cdot\boldsymbol{\theta}_{i-1})  \\& \text{, where} \quad \mathbf{y} = \alpha \left(K \cdot \mathbf{e}_\mathbf{x} - \mathbf{1}\right) + \sqrt{\alpha K} \cdot \epsilon \nonumber
 = \alpha \left(K \cdot \mathbf{e}_\mathbf{x} - \mathbf{1}\right) + \nabla_{\mathbf{e}_\mathbf{x}} \log p(\mathbf{e}_\mathbf{x})
\end{align}
Similarly, as with the case of continuous variables, the Bayesian update distribution for discrete variables can be expressed using Eq. \ref{eq discrete gradient} as follows (for notational convenience, $\delta\left(\boldsymbol{\theta}_i-h\left(\boldsymbol{\theta}_{i-1}, \mathbf{y}_i, \alpha_i\right)\right)$ is denoted as $\zeta_{\mathbf{x}}\left(\theta_i \mid \boldsymbol{\theta}_{i-1}, \mathbf{y}_i, \alpha_i\right)$  in the discrete variable case):

\begin{align}
&p_{\mathtt{U}}\left(\boldsymbol{\theta}^{\mathbf{v}}_i \mid \boldsymbol{\theta}^{\mathbf{v}}_{i-1} ; \alpha_i\right)
=\underset{{p_{\mathtt{O}}\left(\mathbf{m} \mid \boldsymbol{\theta}^{\mathbf{v}}_{i-1} ; t_i\right)}}{\mathbb{E}} \left[\underset{p_{\mathtt{S}}(\mathbf{y}_i \mid \mathbf{m} ; \alpha_i)}{\mathbb{E}} \left[\zeta_{\mathbf{v}}\left(\boldsymbol{\theta}^{\mathbf{v}}_i \mid \boldsymbol{\theta}^{\mathbf{v}}_{i-1}, \mathbf{y}_i, \alpha_i\right)\right]\right]
\\&\text{, where} \nonumber \quad \zeta_{\mathbf{v}}\left(\boldsymbol{\theta}^{\mathbf{v}}_i \mid \boldsymbol{\theta}^{\mathbf{v}}_{i-1}, \mathbf{y}_i, \alpha_i\right):\boldsymbol{\theta}^{\mathbf{v}}_i \leftarrow \textrm{Softmax}\left(e^{\alpha \left(K \cdot \mathbf{e}_\mathbf{x} - \mathbf{1}\right) + \nabla_{\mathbf{e}_\mathbf{x}} \log p(\mathbf{e}_\mathbf{x})}\cdot\boldsymbol{\theta}_{i-1} \right) \nonumber
\end{align}
Through this, we have demonstrated that the Bayesian update process for both continuous and categorical variables can be effectively reformulated into a gradient-based representation. 
In addition, we provide a detailed theoretical discussion and analysis of the relationship between our extended BFN and diffusion models in the Appendix \ref{app:sec:comparision_diff_bfn}. In the following section, we will incorporate this concept specifically into the context of molecular generation for SBDD, introducing a strategy to generate molecules with desired target properties.
\begin{figure*}[!t]
    \centering
    \includegraphics[width=\linewidth]{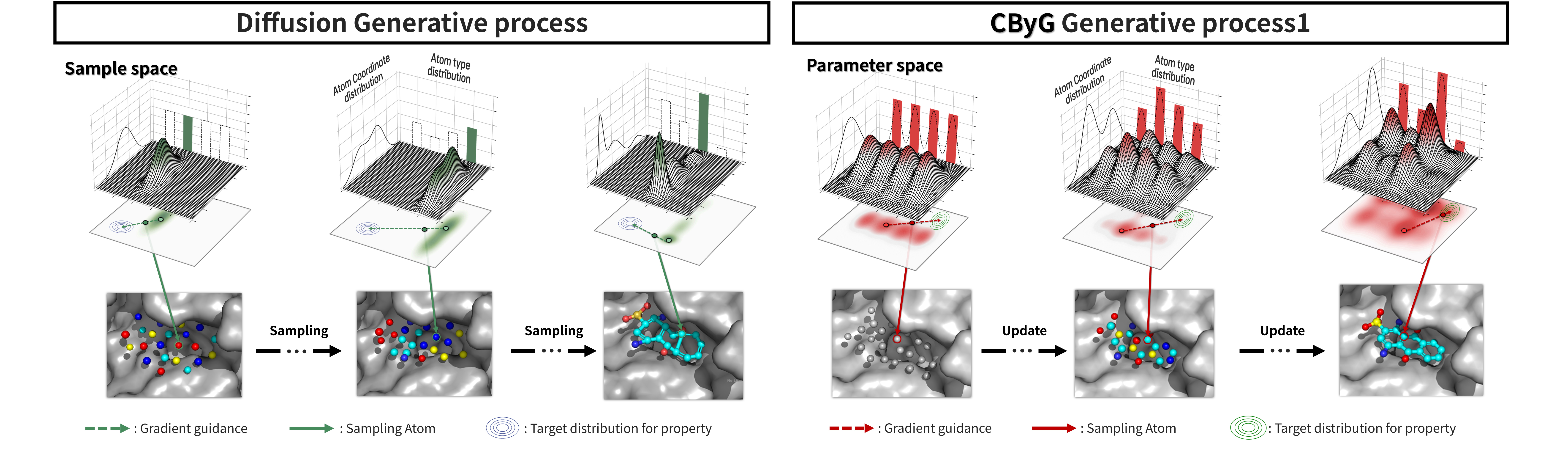}
    \caption{Schematic illustration comparing the guidance propagation mechanisms of diffusion-based models and \method in target protein-aware molecular generation. The figure shows how guidance propagation occurs in the molecular distribution $p(\mathbf{x})$ across the 3D molecular space for each model type. Further interpretation of this schematic illustration is provided in the Appendix \ref{app:sec:diff_imp}.}
    \label{fig:framework}
    \vspace{-8pt}
\end{figure*}
\vspace{-5pt}
\section{Guided Bayesian Flow Networks for 3D Molecule Generation in Structure-Based Drug Design }

\subsection{Notation}
The primary objective of this study is to advance SBDD by developing generative models capable of producing molecules specifically tailored to bind to a given protein target. A target protein is represented as $\mathbf{P}=\{(\mathbf{x}_\text{P}, \mathbf{v}_\text{P})\}$, and our goal is to generate ligand molecules denoted by $\mathbf{m}=\{(\mathbf{x}_\text{M}, \mathbf{v}_\text{M})\}$.

Each atom in molecules and proteins is described by its 3D position $\mathbf{x} \in \mathbb{R}^3$ and its chemical type $\mathbf{v} \in \mathbb{R}^K$, where $K$ represents the total number of possible atom types. Molecules and proteins can thus be effectively represented as matrices: $\mathbf{m}=[\mathbf{x}_\text{M}, \mathbf{v}_\text{M}]$ and $\mathbf{p}=[\mathbf{x}_\text{P}, \mathbf{v}_\text{P}]$, with dimensions $\mathbf{x}_\text{M} \in \mathbb{R}^{N_\text{M}\times3}$, $\mathbf{v}_\text{M}\in\mathbb{R}^{N_\text{M}\times K}$, $\mathbf{x}_\text{P}\in\mathbb{R}^{N_\text{P}\times3}$, and $\mathbf{v}_\text{P}\in\mathbb{R}^{N_\text{P}\times K}$. For brevity, we denote molecules as $\mathbf{M}=[\mathbf{x},\mathbf{v}]$. Here, $N_\text{M}$ and $N_\text{P}$ denote the number of atoms in the molecule and the protein, respectively.

In the following section, we introduce our proposed model, \method. A general schematic comparison between \method and diffusion models is provided in Figure \ref{fig:framework}.

\subsection{Bayesian Update Distribution in SBDD}
To enable property-controlled generation without retraining, we leverage the gradient-based Bayesian update formulation (Section \ref{sec:BFN2grad}) explicitly conditioned on target properties $\mathtt{l}$:
$
\boldsymbol{\theta}_i \leftarrow \mathbf{h}(\boldsymbol{\theta}_{i-1}, \mathbf{y}_i, \alpha_i, \mathtt{l})
$.
Instead of embedding properties directly into the output distribution (which would require retraining), we integrate a pretrained BFN with a BNN (Bayesian Neural Network) based external property predictor. This conditional sampling approach closely aligns with methods used in conditional diffusion models and guided generative frameworks \citep{song2020score, bao2022equivariant}. Formally, extending \Cref{eq:px derivate}, the conditional molecular distribution is defined as follows:
\begin{align}
p_\phi(\mathbf{m}|\mathbf{p},\mathtt{l}) &= \int p_\phi(\mathbf{m}|\boldsymbol{\theta}_n,\mathbf{p},\mathtt{l}) p(\boldsymbol{\theta}_0) \prod_{i=1}^n p_{\mathtt{U}}(\boldsymbol{\theta}_i|\boldsymbol{\theta}_{i-1},\mathbf{p},\mathtt{l};\alpha_i) d\boldsymbol{\theta}_{1:n} \nonumber \\
&=\left[\int p_\phi(\mathbf{x}|\boldsymbol{\theta}_n^\mathbf{x},\mathbf{p},\mathtt{l}) p(\boldsymbol{\theta}_0^\mathbf{x})\prod_{i=1}^n p_{\mathtt{U}}(\boldsymbol{\theta}_i^\mathbf{x}|\boldsymbol{\theta}_{i-1}^\mathbf{x},\mathbf{p},\mathtt{l};\alpha_i)d\boldsymbol{\theta}_{1:n}^\mathbf{x}\right] \nonumber \\
&\quad\cdot\left[\int p_\phi(\mathbf{v}|\boldsymbol{\theta}_n^\mathbf{v},\mathbf{p},\mathtt{l}) p(\boldsymbol{\theta}_0^\mathbf{v})\prod_{i=1}^n p_{\mathtt{U}}(\boldsymbol{\theta}_i^\mathbf{v}|\boldsymbol{\theta}_{i-1}^\mathbf{v},\mathbf{p},\mathtt{l};\alpha_i)d\boldsymbol{\theta}_{1:n}^\mathbf{v}\right].
\end{align}

The Bayesian update distribution $p_{\mathtt{U}}$, incorporating explicit conditioning on protein $\mathbf{p}$ and property $\mathtt{l}$, is given by:
\begin{align}
&p_{\mathtt{U}}(\boldsymbol{\theta}^{\mathbf{x}}_i|\boldsymbol{\theta}^{\mathbf{x}}_{i-1},\mathbf{p},\mathtt{l};\alpha_i) = 
\mathbb{E}_{p_{\mathtt{O}}(\mathbf{m}|\boldsymbol{\theta}_{i-1},\mathbf{p};t_i)}\left[\mathbb{E}_{p_{\mathtt{S}}(\mathbf{y}_i|\mathbf{x};\alpha_i)}\left[\zeta_\mathbf{x}(\boldsymbol{\theta}^{\mathbf{x}}_i|\boldsymbol{\theta}^{\mathbf{x}}_{i-1},\mathbf{y}_i,\mathbf{x},\alpha_i,\mathbf{p},\mathtt{l})\right]\right] \nonumber
\\&
p_{\mathtt{U}}(\boldsymbol{\theta}^{\mathbf{v}}_i|\boldsymbol{\theta}^{\mathbf{v}}_{i-1},\mathbf{p},\mathtt{l};\alpha_i) = 
\mathbb{E}_{p_{\mathtt{O}}(\mathbf{m}|\boldsymbol{\theta}^{\mathbf{v}}_{i-1},\mathbf{p};t_i)}\left[\mathbb{E}_{p_{\mathtt{S}}(\mathbf{y}_i|\mathbf{v};\alpha_i)}\left[\zeta_{\mathbf{v}}(\boldsymbol{\theta}^{\mathbf{v}}_i|\boldsymbol{\theta}^{\mathbf{v}}_{i-1},\mathbf{y}_i,\mathbf{v},\alpha_i,\mathbf{p},\mathtt{l})\right]\right].
\end{align}

The sender distribution for atom positions is defined as: \( p_{\mathtt{S}}(\mathbf{y} \mid \mathbf{x} ; \alpha \mathbf{I}) = \mathcal{N}(\mathbf{x}, \alpha^{-1} \mathbf{I}) \), where $\alpha$ controls the noise intensity around clean coordinates $\mathbf{x}$. The corresponding output distribution is modeled via a neural network $\Psi$:  \( p_{\mathtt{O}}(\mathbf{m} \mid \boldsymbol{\theta}_{i-1}, \mathbf{p}; t) = \Psi(\boldsymbol{\theta}_{i-1},\mathbf{p}, t_i) \), which predicts atom coordinates and types at each time step $t_i$, conditioned on protein $\mathbf{p}$. For atom types, the sender distribution embeds discrete types, $\mathbf{v}$ into continuous space:  \( p_{\mathtt{S}}(\mathbf{y} \mid \mathbf{v} ; \alpha) = \mathcal{N}(\mathbf{y} \mid \alpha(K \mathbf{e}_{\mathbf{v}} - \mathbf{1}), \alpha K \cdot \mathbf{I}) \), with $\mathbf{e}_\mathbf{v}$ as a one-hot vector, $K$ the number of categories, and $\alpha$ scaling the noise.

Employing score-based guidance \citep{song2020score,dhariwal2021diffusion}, we embed property conditions into the gradient updates of $\boldsymbol{\theta}$ identified in Section \ref{sec:BFN2grad}, explicitly guiding molecule generation towards desired properties. Thus, our guided Bayesian updates for continuous and discrete variables are respectively defined as:
\begin{align}
\zeta_{\mathbf{x}}\left(\boldsymbol{\theta}_i^\mathbf{x} \mid \boldsymbol{\theta}_{i-1}^\mathbf{x}, \mathbf{y}_i,\mathbf{x}, \alpha_i,\mathbf{p}, \mathtt{l}\right) : \boldsymbol{\theta}_i^\mathbf{x} & \leftarrow \frac{\alpha_i}{\rho_i} \cdot \mathbf{x}+\frac{\rho_{i-1}}{\rho_i} \cdot \boldsymbol{\theta}_{i-1}^\mathbf{x}+\frac{1}{\rho_i} \nabla \mathbf{x} \log p(\mathbf{x\mid\mathtt{l}}) \nonumber
\\ & = 
\underbrace{\frac{\alpha_i}{\rho_i} \cdot \mathbf{y} + 
\frac{\rho_{i-1}}{\rho_i} \cdot \boldsymbol{\theta}^{\mathbf{x}}_{i-1}}_{\texttt{Unconditional Generation}} + 
\underbrace{\frac{1}{\rho_i} \nabla_\mathbf{x} \log p(\mathtt{l}\mid\mathbf{x})}_{\texttt{Controllable Guidance}}
\end{align}
\vspace{-5pt}

\begin{align}
\zeta_{\mathbf{v}}\left(\boldsymbol{\theta}_i^\mathbf{v} \mid \boldsymbol{\theta}_{i-1}^{\mathbf{v}}, \mathbf{y}_i,\mathbf{x}, \alpha_i,\mathbf{p}, \mathtt{l}\right) : \boldsymbol{\theta}_i^\mathbf{v} 
& \leftarrow \textrm{Softmax}\bigg(e^{\alpha_i \left(K \cdot \mathbf{e}_\mathbf{x} - \mathbf{1}\right) + \nabla_{\mathbf{e}_\mathbf{x}} \log p(\mathbf{e}_\mathbf{x} \mid \mathtt{l})}\cdot\boldsymbol{\theta}_{i-1}^\mathbf{v} \bigg) \nonumber
\\ & = \textrm{Softmax}\bigg(
\underbrace{e^{\mathbf{y}}\cdot\boldsymbol{\theta}_{i-1}^\mathbf{v}}_{\substack{\texttt{Unconditional} \\ \texttt{Generation}}}\cdot \,\,\, \underbrace{e^{\nabla_{\mathbf{e}_\mathbf{x}} \log p(\mathtt{l}\mid \mathbf{e}_\mathbf{x})}}_{\substack{\texttt{Controllable} \\ \texttt{Guidance}}}
\bigg)
\end{align}

The \( n \)-step sampling process of \method begins with an initial parameter \( \boldsymbol{\theta}_0 \), accuracy parameters $\alpha_i$, and time steps $t_i$. This process sequentially generates \( \boldsymbol{\theta}_1, \ldots, \theta_n \) to produce the final sample. First, using \( \boldsymbol{\theta}_{i-1} \) and \( t_{i-1} \), a sample \( \mathbf{x} \) is drawn from the output distribution $p_{\mathtt{O}}(\cdot\mid\boldsymbol{\theta}_{i-1},\mathbf{p};t_i)$. Next, based on \( \mathbf{x} \) and \( \alpha_i \), a noisy sample \( y \) is drawn from the sender distribution $p_{\mathtt{S}}(\cdot\mid\mathbf{x}_{i-1},\mathbf{p};\alpha_i)$. The parameter is then updated via $p_{\mathtt{U}}(\cdot\mid\boldsymbol{\theta}_{i-1},\mathbf{y},\mathbf{p},\mathtt{l};\alpha_i)$. Finally, the sample is generated from $p_{\mathtt{O}}(\cdot\mid\boldsymbol{\theta}_{n},\mathbf{p};t_n)$ using the refined parameter \( \theta_n \). 
Detailed descriptions of the Bayesian neural network-based property predictor for quantifying predictive uncertainty (guidance reliability), the sampling procedure of \method that incorporates this uncertainty, and the SE(3)-equivariance of the proposed guidance method are provided in Appendix \ref{app:sec:sampling}.

\section{Experiment}
To thoroughly evaluate the effectiveness and practicality of our proposed framework, we define 4 key research questions essential to the domain of structure-based drug design:
1) Are conventional metrics sufficient for reliably evaluating the predicted binding affinity of generated molecules? 2) Is the widely-used Synthetic Accessibility (SA) score an absolute indicator for practical synthetic feasibility? 3) Does the guidance mechanism incorporated into our Bayesian Flow Network (BFN) framework significantly enhance controllable generation performance? 4) Can our generative framework effectively handle molecular selectivity, a key property required for generating novel drug candidates?
In the subsequent sections, we systematically address each of these research questions through rigorous empirical analyses and extensive evaluations.

\subsection{Experimental Setup}
We use the CrossDocked dataset \citep{francoeur2020three} for training and testing, which originally contains 22.5 million protein-ligand pairs, and after the RMSD-based filtering and 30$\%$ sequence identity split by Luo et al. \citep{luo20213d}, results in 100,000 training pairs and 100 test proteins. For each test protein, we sample 100 molecules for evaluation. 
Baseline models used for comparison are described in the Appendix \ref{app:sec:experiment setup}.

\subsection{Comprehensive Evaluation Including Binding Affinity of Generated Molecules}
\label{subsec:rq1}
Previous evaluations of generative models for target-specific molecular generation have typically relied on binding affinity metrics derived primarily from AutoDock Vina \citep{eberhardt2021autodock}, introducing inherent biases due to dependence on a single algorithm. To address this limitation, we incorporate additional docking tools, namely SMINA \citep{koes2013lessons}, which captures broader physicochemical interactions, and GNINA \citep{mcnutt2021gnina}, employing a deep learning-based scoring function, to enhance the reliability and comprehensiveness of the evaluation. Furthermore, shifting away from the narrow focus on binding affinity alone, we introduce the PoseBusters \citep{buttenschoen2024posebusters} benchmark, which assesses molecular validity and stability across 17 metrics, including chemical consistency and ligand stability. Detailed descriptions of the evaluation metrics and experimental setup are provided in the Appendix \ref{app:sec:experiment metric}.

\textbf{Results.} As shown in Table.\ref{table:main}, Our proposed model substantially outperforms baseline methods across 12 key metrics related to binding affinity and molecular properties, confirming effective gradient-based guidance during generation.
While baseline models exhibit significant differences between pre- (Score. in Table.\ref{table:main}) and post- (Dock. in Table.\ref{table:main}) docking scores, our model maintains consistently high affinity scores even before docking, indicating enhanced capability to implicitly identify favorable binding poses and explicitly capture protein-ligand interactions.
The robustness of our model's superior binding affinity is further supported by its consistent outperformance across all three docking tools, unlike baseline methods whose relative rankings fluctuate across these tools.
Although our method does not achieve state-of-the-art performance in molecular diversity, this limitation naturally arises from conditional generation, which inherently targets narrower distributions to meet specific binding criteria.
Futhermore, We assessed energetic stability of top-ranked molecules from \method and baseline models using the PoseCheck \citep{harris2023posecheck} benchmark. Figure. \ref{fig:strain_energy} shows consistently low energies, indicating stable binding poses. Additional results are provided in the appendix.
Detailed interpretations and results, including comprehensive score comparisons before and after docking and ablation studies on guidance scale variations within the \method framework, are provided in the Appendix \ref{app:sec:experiment result}.
\begin{table}[!t]
\caption{Summary of binding affinity and molecular properties of reference molecules and molecules generated by \method and baselines. (↑) / (↓) denotes whether a larger / smaller number is preferred. Top 2 results are bolded and underlined, respectively.}
\label{table:main}
\centering
\small
{\resizebox{\textwidth}{!}{
\begin{tabular}{c|c|cc|cc|cc|cc|cc|cc|cc}
\toprule
\multicolumn{2}{c|}{\textbf{Methods}} & \multicolumn{2}{c|}{\textbf{SMINA (↓)}} & \multicolumn{2}{c|}{\textbf{GNINA (↓)}} & \multicolumn{2}{c|}{\textbf{Vina (↓)}} & \multicolumn{2}{c|}{\textbf{High Affinity(↑)}}   & \multicolumn{2}{c|}{\textbf{SA(↑)}}  & \multicolumn{2}{c|}{\textbf{PB-Valid(↑)}} & \multicolumn{2}{c}{\textbf{Diversity(↑)}} \\

\multicolumn{2}{c|}{}  & Score. & Dock. & Score. & Dock. & Score. & Dock. & Avg. & Med. & Avg. & Med. & Avg. & Med. & Avg. & Med.\\
\midrule
\multicolumn{2}{c|}{Reference} & -6.37 & -7.92 & -7.06 & -7.61 & -6.36 & -7.45 & - & - & 0.73 & 0.74 & 95.0\% & 95.0\% & - & - \\
\midrule
\multirow{5}{*}{Gen.}
& AR\citep{luo20213d}& -5.04 & -7.18 & -5.96 & -6.31 & -5.75 & -6.75  & 37.9\% & 31.0\%  & 0.63 & 0.63 & 88.3\% & 90.0\% & 0.70 & 0.70 \\
& Pocket2Mol \citep{peng2022pocket2mol} & -4.38 & -7.85 & -5.50 & -6.30 & -5.14 & -7.15 & 48.4\% & 51.0\% & 0.74 & 0.75 & 72.5\% & 86.5\% & 0.69 & 0.71 \\
& TargetDiff \citep{guan20233d} & -3.66 & -8.51 & -5.50 & -6.44 & -5.47 & -7.80 & 58.1\% & 59.1\% & 0.58 & 0.58 & 67.0\% & 70.0\% & 0.72 & 0.71 \\
& DecompDiff \citep{guan2024decompdiff}& -4.48 & -8.33 & -5.89 & -6.42 & -5.67 & -8.39 & 64.4\% & 71.0\% & 0.61 & 0.60 & 46.5\% & 45.9\% & 0.68 & 0.68 \\
& MolCRAFT \citep{qu2024molcraft} & \underline{-6.03} & \underline{-8.56} & -6.02 & \underline{-7.26} & -6.61 & -7.59 & 63.4\% & - 70.0\% & 0.68 & 0.67 & 74.0\% & 82.5\% & \underline{0.70} & \textbf{0.73} \\
&  & &  &  & && &  &  &  &  &  & & &  \\
\multirow{5}{*}{\parbox{0.7cm}{\centering Gen. \\ + \\ Opt.}}
& RGA \citep{fu2022reinforced}& 22.61 & -7.25 & 19.99 & -5.28 & 20.58 & -8.01 & 64.4\% & 89.3\% & 0.71 & 0.73 & \underline{91.3\%} & \underline{94.1\%} & 0.41 & 0.41 \\
& DecompOpt \citep{zhoudecompopt}& -3.54 & -8.10 & -5.26 & -6.30 & -5.87 & \underline{-8.98} & 73.5\% & \underline{93.3\%} & 0.65 & 0.65 & 68.7\% & 77.7\% & 0.60 & 0.61 \\
& TacoGFN \citep{shentacogfn}& 39.80 & -7.46 & 32.90 & -6.50 & 32.63 & -7.74 & 58.4\% & 63.6\% &  \underline{0.79}& \underline{0.80} & 89.3\% & 90.1\% & 0.56 & 0.56 \\
& ALIDiff \citep{gu2024aligning}& -5.68 & -8.47 & \underline{-6.85} & -5.61 & \underline{-7.07} & -8.90  & 73.4\% & 81.4\% & 0.57 & 0.56 & 44.3\% & 40.0\% & \textbf{0.73} & \underline{0.71} \\
& TargetOpt & -5.71 & -8.39 & -6.12 & -7.01 & -6.96 & -8.87 & \underline{76.6\%}& 82.3\%& 0.71& 0.69& 78.8\%& 81.3\%&0.60 & 0.59\\
& \textbf{\method }& \textbf{-7.74} & \textbf{-9.61} & \textbf{-7.63} & \textbf{-8.33} & \textbf{-8.60} & \textbf{-9.16} & \textbf{93.6\%} & \textbf{100.\%} & \textbf{0.84} & \textbf{0.87} & \textbf{94.9\%} & \textbf{96.0\%} & 0.61 & 0.62 \\
\bottomrule
\end{tabular}
}}
\end{table}

\begingroup
\setlength{\textfloatsep}{-4pt}
\begin{figure*}[!b]
    \centering
    \includegraphics[width=\linewidth]{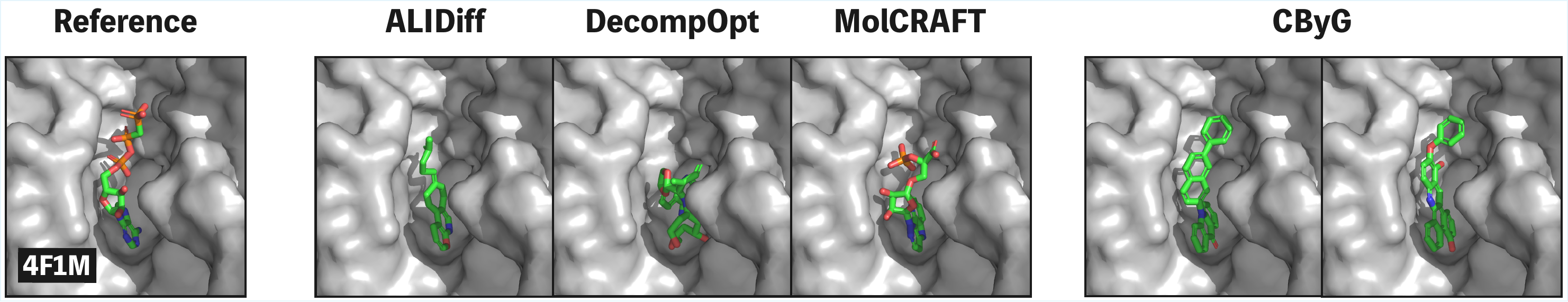}
    \caption{Visualizations of reference molecules and generated ligands for protein pockets (PDB ID: \texttt{4F1M}) generated by \method, ALIDiff, DecompOpt, and MolCRAFT.}
    \setlength{\abovecaptionskip}{-15pt} 
    \setlength{\belowcaptionskip}{-15pt} 
    \label{fig:vis}
    \vspace{-10pt}
\end{figure*}
\endgroup

\begin{wrapfigure}{r}{0.3\textwidth} 
\centering
\vspace{-10pt} 
\includegraphics[width=0.3\textwidth]{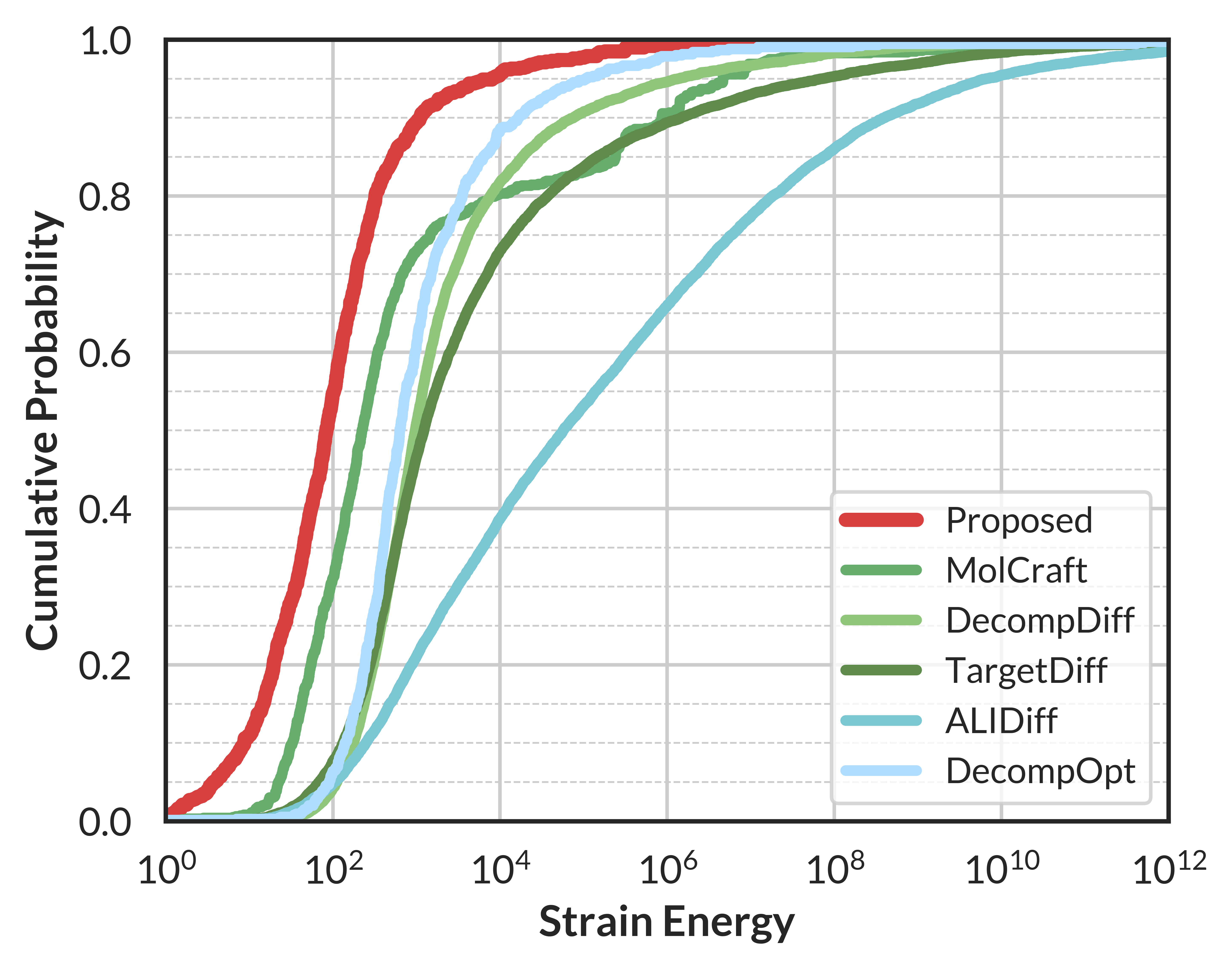}
\setlength{\abovecaptionskip}{-10pt} 
\setlength{\belowcaptionskip}{-5pt} 
\caption{Cumulative distribution function of strain energy}
\label{fig:strain_energy}
\vspace{-5pt} 
\end{wrapfigure}

\subsection{Evaluation of Realistic Synthetic Feasibility for Generated Molecules}
\label{subsec:rq2}
To achieve a more precise assessment of the realistic synthesizability of generated molecules beyond conventional Synthetic Accessibility (SA) scores (as discussed in Section \ref{sec:Rethinking}), we introduced the AiZynthFinder benchmark, which utilizes Monte Carlo Tree Search (MCTS) to systematically identify viable retrosynthetic pathways. This benchmark quantitatively evaluates practical synthetic feasibility through six distinct metrics (Detailed descriptions and values provided in the Appendix \ref{app:sec:experiment metric}).

\textbf{Results.}
Our experimental results (Table. \ref{tab:targetdiff}) demonstrate that despite exhibiting high binding affinity, our proposed model achieves near state-of-the-art performance in AiZynthFinder evaluations, indicating its ability to generate molecules that are both efficacious and realistically synthesizable.
Interestingly, we observed no clear correlation between SA scores and actual retrosynthetic feasibility (as measured by the Solved metric), underscoring the need for broader and more comprehensive evaluations of synthetic accessibility in future SBDD research.
Notably, models with high SA scores exhibited practical retrosynthesis success rates of less than 50\%, emphasizing the necessity of placing greater emphasis on realistic synthetic feasibility within the SBDD field. Further detailed discussion is provided in the Appendix \ref{app:sec:experiment result}.

\subsection{Assessment of Guidance Effectiveness Across Model Types}
\label{subsec:rq3}

To validate the effectiveness of gradient-based guidance injection within \method framework, we compared it empirically against a diffusion-based model. Specifically, we analyzed the dynamics of guidance scores ($p(\mathtt{l}\mid\mathbf{m})$), representing molecular property predictions at each timestep during controllable generation (Diffusion model implementation details provided in Appendix \ref{app:sec:diff_imp}).

\begin{figure*}[!t]
    \begin{minipage}{0.5\linewidth}
        \centering
        \scriptsize
        \setlength{\abovecaptionskip}{6pt} 
        \captionof{table}{Evaluation results of SA for molecules generated by baseline models and the proposed model using the AiZynthFinder benchmark.}
        \label{tab:targetdiff}
        \renewcommand{\arraystretch}{0.9}
        \resizebox{1.0\textwidth}{!}{
        \begin{tabular}{c|c|c|c|c}
        \toprule
        \makecell{\textbf{Metric}}  
        & \makecell{Solved \\ (↑)} 
        & \makecell{Routes \\ (↑)} 
        & \makecell{Solved \\ Routes \\(↑)} 
        & \makecell{Top \\ Score \\ (↑)} 
        \\
        \midrule
        Reference  & 0.410 & 294.1 & \underline{11.43} & \underline{0.826} \\ \midrule
        AR  & 0.194 & 227.2 & 3.616 & 0.744 \\
        Pocket2Mol & 0.373 & 150.6 & 7.743 & 0.797 \\ 
        TargetDiff & 0.082 & 233.5 & 1.800 & 0.695 \\ 
        DecompDiff & 0.038 & 202.7 & 1.165 & 0.703 \\ 
        MolCRAFT & 0.196 & 242.2 & 4.554 & 0.745 \\ 
        ALIDiff & 0.079 & \underline{321.8} & 1.521 & 0.660 \\ 
        RGA & \textbf{0.487} & 157.9 & 3.292 & 0.822 \\ 
        DecompOpt & 0.080 & 201.9 & 1.205 & 0.713 \\ 
        TacoGFN & 0.072 & 255.4 & 0.530 & 0.743 \\ 
        \textbf{\method} & \textbf{0.487} & \textbf{334.4} & \textbf{18.90} & \textbf{0.853} \\ 
        \bottomrule
        \end{tabular}
        }
    \end{minipage}
    \hfill
    \begin{minipage}{0.45\linewidth}
        \centering
        \vspace{5pt}
        \hspace*{-0.25cm}
        \includegraphics[width=1.1\textwidth]{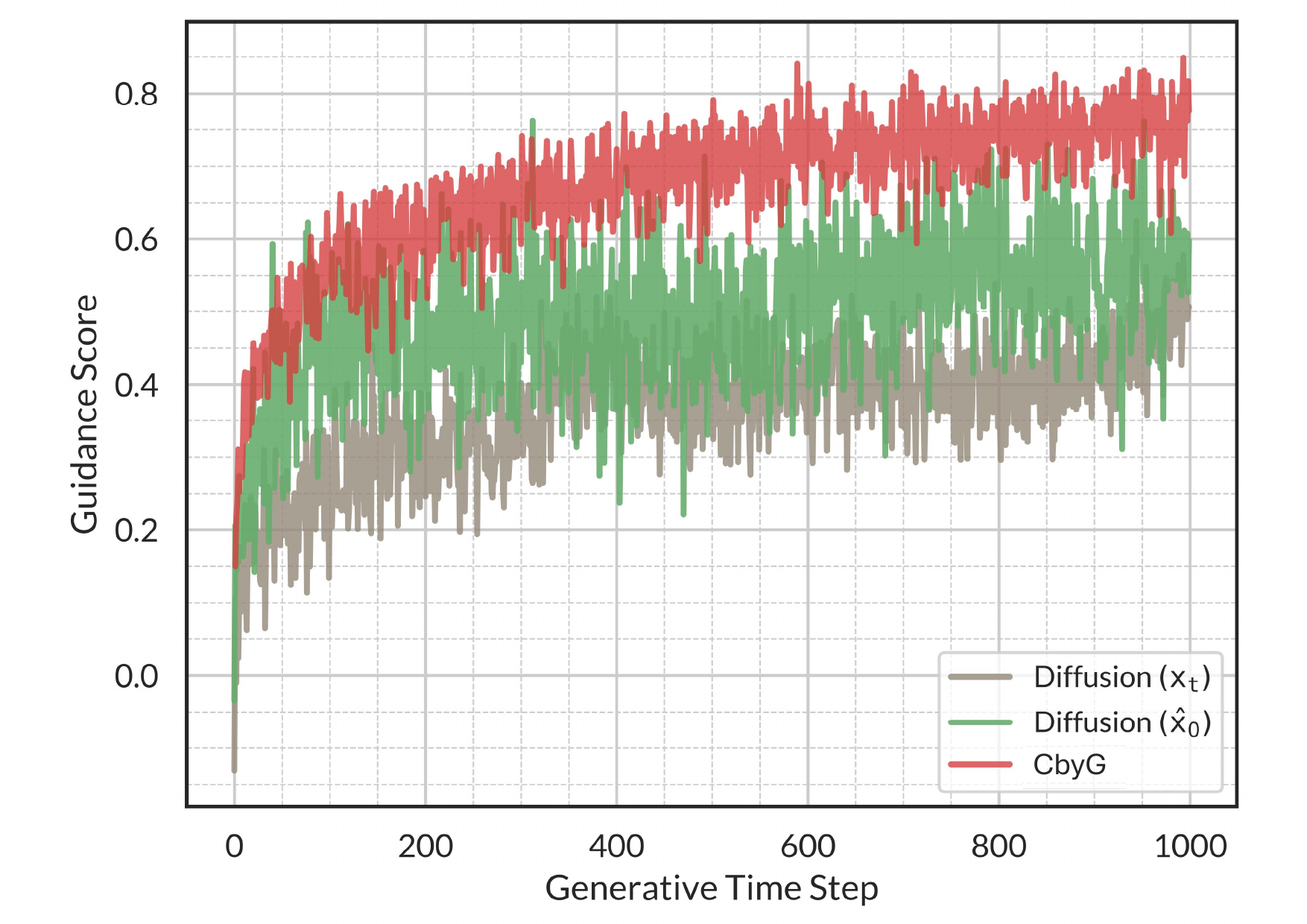}
        \setlength{\abovecaptionskip}{-10pt}
        \caption{Guidance score dynamics  of three model types throughout the generation process}
        \label{fig:guidance_score}
    \end{minipage}
    \vspace{-0.25in}
\end{figure*}

\textbf{Results.} As shown in Figure. \ref{fig:guidance_score}, our \method guidance approach consistently achieves higher and more stable guidance scores compared to diffusion-based methods. While these method utilizing predicted final states ($\mathbf{x}_0$) can yield higher absolute scores, they exhibit increased variance due to unstable point-estimations from intermediate noisy states. Unlike diffusion models operating directly in sample space, \method conducts continuous gradient calculations within parameter space (including categorical variables) thereby enabling more stable and effective controllable generation.
Further detailed analyses and discussions on this comparative experiment are provided in Appendix \ref{app:sec:experiment result}.

\subsection{Evaluation of Selectivity Control Capability}
\label{subsec:rq4}
In this experiment, we evaluated whether our proposed controllable generation strategy effectively enhances molecular selectivity, a critical factor in practical drug discovery. 
\begin{wraptable}{r}{0.4\textwidth}
\vspace{-8pt}
\centering
\setlength{\tabcolsep}{0.8pt}
\small
\setlength{\abovecaptionskip}{-1pt}
\captionsetup{font=small}
\caption{Evaluation of Selectivity Optimization Capability between Diffusion-based Guidance Methods and the \method Model. "\method w/o G" denotes the CByG model without guidance injection.}
\label{tab:targetdiff}
\begin{tabular}{l|c|c}
\toprule
{\parbox{2.0cm}{\centering \textbf{Metric}}}  
& {\parbox{1.6cm}{\centering Succ. Rate (↑)}} 
& {\parbox{1.6cm}{\centering $\Delta$ Score (↓)}} \\
\midrule
{\parbox{2.0cm}{\centering TargetDiff}}  & 58.2 \% & \multirow{2}{*}{\parbox{0.7cm}{\centering -0.78}}  \\ 
{\parbox{2.0cm}{\centering TargetOpt}}  & 68.6 \% &   \\ \midrule
{\parbox{2.0cm}{\centering \method w/o G}} & 62.6 \% & \multirow{2}{*}{\parbox{0.7cm}{\centering -1.39}}  \\ 
{\parbox{2.0cm}{\centering \method}} & 78.3 \% & \\ 
\bottomrule
\end{tabular}
\end{wraptable}
We constructed a biologically relevant selectivity test set from prior pharmacological studies \citep{karaman2008quantitative,davis2011comprehensive} involving binding data of 38 kinase inhibitors across a panel of 317 kinases (Refer to Appendix \ref{app:sec:Selectivity_data} for detailed methods and dataset construction). 
Our results demonstrate that the proposed BFN-based guidance method significantly outperforms diffusion-based approaches in selectivity metrics (Succ. Rate and Score), with greater improvements upon guidance injection. These findings strongly support the practical applicability of BFN-based guidance in real-world drug development scenarios. Further detailed metrics and analyses are provided in the Appendix \ref{app:sec:experiment result}.
\vspace{-5pt}
\section{Conclusions}
\vspace{-5pt}
In this paper, we introduced \method, a generative framework designed to produce 3D molecules satisfying multiple critical properties, thereby enhancing practical applicability in structure-based drug design (SBDD). We discussed inherent limitations of conventional diffusion-based guidance methods and theoretically integrated a gradient-based guidance mechanism within the BFN framework to address these issues. Empirical evaluations, guided by the key objectives highlighted in Section \ref{sec:Rethinking}, demonstrated that our proposed model significantly outperforms comparative baselines across various evaluation metrics. Nevertheless, the finding that approximately half of the molecules generated (even by advanced models including \method) remain synthetically infeasible offers a critical insight and underscores a significant area for further investigation within this research domain.

\bibliographystyle{plainnat}
\bibliography{ref}

\newpage
\appendix
\section{Detail description of Bayesian Flow Network}
\label{app:sec:Detail description bfn}
\subsection{Key Distribution}
A Bayesian Flow Network (BFN) maintains four distributions at each step: the input distribution $p_\mathtt{I}(\mathbf{m}\mid\boldsymbol{\theta})$, the output distribution $p_\mathtt{O}(\mathbf{m}\mid\boldsymbol{\theta},t)$, the sender distribution $p_\mathtt{S}(\mathbf{y}\mid\mathbf{m};\alpha)$, and the receiver distribution $p_\mathtt{R}(\mathbf{y}\mid\boldsymbol{\theta},t,\alpha)$.  Semantically, the input distribution represents the model’s current belief (prior or posterior) over the data; the output distribution is the network’s predicted distribution (allowing context) given the input parameters; the sender distribution is a noisy perturbation of the true data; and the receiver distribution is the model’s predicted distribution over such noisy messages (averaging over the output distribution). Mathematically, all four are factorized over the data dimensions for tractability. 

\textbf{Input distribution} $p_\texttt{I}(\mathbf{m}\mid\boldsymbol{\theta})$. This is a factorized distribution over the data $\mathbf{m}=(\mathbf{m}^{(1)},\dots,\mathbf{m}^{(D)})$ with parameters $\boldsymbol{\theta}=(\boldsymbol{\theta}^{(1)},\dots,\boldsymbol{\theta}^{(D)})$. For example, each $\boldsymbol{\theta}^{(d)}$ might parametrize a univariate Gaussian or categorical for $\mathbf{m}^{(d)}$. Initially $p_\texttt{I}$ is a simple prior (e.g.\ $\mathcal{N}(0,1)$ or a uniform categorical). During sampling, $p_\texttt{I}$ is updated via Bayes’s rule when new observations arrive, so its parameters $\boldsymbol{\theta}$ become progressively more informative about $\mathbf{m}$.

\textbf{Output distribution} $p_\texttt{O}(\mathbf{m}\mid\boldsymbol{\theta},t)$. Given input parameters $\boldsymbol{\theta}$ and the current (discrete or continuous) process time $t$, a neural network computes an output vector that parameterizes $p_\texttt{O}$. However, unlike $p_\texttt{I}$ it \underline{integrates context} across dimensions via the network: the parameters of $p_\texttt{O}$ depend jointly on all of $\boldsymbol{\theta}$ and $t$, allowing modeling of correlations among data dimensions . In effect, $p_\texttt{O}$ represents the empirical sample distribution accumulated up to step $t$, providing a comprehensive prediction that integrates the gathered evidence $\mathbf{\theta}$ together with contextual information, and it will be used to predict the clean data distribution.

\textbf{Sender distribution} $p_\texttt{S}(\mathbf{y}\mid \mathbf{m};\alpha)$. This is the distribution over a noisy observation $\mathbf{y}$ given the true data $\mathbf{m}$, with accuracy (noise level) $\alpha_i$. It is also factorized across dimensions: $p_\mathtt{S}(\mathbf{y}_i \mid \mathbf{m} ; \alpha_i)=\prod_{d=1}^D p_\mathtt{s}\left(\mathbf{y}_i^{(d)} \mid \mathbf{m}^{(d)} ; \alpha_i\right)$. The parameter $\alpha_i$ controls informativeness: at $i=0$ the sender’s sample is pure noise (uninformative about $\mathbf{m}$), and as $i\to\infty$ the sample concentrates on $\mathbf{y}=\mathbf{m}$. In practice, $\alpha$ increases through the transmission steps, so that each sender sample carries more refined information about the true data. Intuitively, $p_\texttt{S}$ defines the “message” drawn from data: Alice adds controlled Gaussian or categorical noise to $x$ to form the sender distribution and then draws a sample $y\sim p_\texttt{S}(\cdot\mid x;\alpha)$ (For the Alice-and-Bob example, we recommend consulting the original BFN paper \citep{graves2023bayesian}).

\textbf{Receiver distribution} $p_\texttt{R}(\mathbf{y}\mid\boldsymbol{\theta},t,\alpha_i)$. This is the model’s predictive distribution over possible noisy observations $\mathbf{y}$ given the output distribution. Formally, it is obtained by marginalizing out the unknown true data $\mathbf{m}$: $p_\mathtt{R}(\mathbf{y}_i \mid \boldsymbol{\theta}_{i-1} ; t_i, \alpha_i)=\underset{{\mathbf{\hat{m}} \sim p_\mathtt{O}\left(\mathbf{\hat{m}} \mid \theta_{i-1} ; t_i\right)}}{\mathbb{E}} p_\mathtt{S}\left(\mathbf{y} \mid \mathbf{\hat{m}} ; \alpha_i\right)$.  In other words, for every candidate $\mathbf{m}$, we consider the sender distribution $p_\mathtt{S}(\mathbf{y}\mid\mathbf{m};\alpha)$ that would have been used if $\mathbf{m}$ were the truth, and weight these by the network’s probability $p_\mathtt{O}(\mathbf{m}\mid\boldsymbol{\theta},t)$.  The receiver distribution thus captures both the “known unknown” due to the sender noise (entropy of $p_S$) and the “unknown unknown” from the output distribution’s uncertainty.

\textbf{Bayesian update distribution} $p_\texttt{U}(\boldsymbol{\theta}_{i}\mid \boldsymbol{\theta}_{i-1};\alpha_i)$.
This distribution specifies how the parameter vector of the input distribution evolves after assimilating a noisy observation. Concretely, let the closed-form update rule be $\displaystyle \boldsymbol{\theta}_{i}=h(\boldsymbol{\theta}_{i-1},\mathbf{y}_{i},\alpha_i)$ where $\mathbf{y}_{i}\sim p_\texttt{S}(\mathbf{y}_{i}\mid\mathbf{m};\alpha_i)$. Conditioning on the current parameters $\boldsymbol{\theta}_{i}$ and the accuracy level $\alpha_i$, the update distribution is defined as the push-forward of the sender distribution:

\begin{align}
\label{eq:bayesian update appen}
p_{\mathtt{U}}\left(\boldsymbol{\theta}_i \mid \boldsymbol{\theta}_{i-1}; \alpha_i\right)=\underset{{p_{\mathtt{O}}\left(\mathbf{m} \mid \theta_{i-1} ; t_i\right)}}{\mathbb{E}} \left[\underset{p_{\mathtt{S}}(\mathbf{y}_i \mid \mathbf{m} ; \alpha_i)}{\mathbb{E}} \left[\delta\left(\boldsymbol{\theta}_i-h\left(\boldsymbol{\theta}_{i-1}, \mathbf{y}_i, \alpha_i\right)\right)\right]\right]
\end{align}
Because $h(\cdot)$ is applied component-wise under the factorisation of $p_\texttt{S}$, $p_\texttt{U}$ factorises across parameter dimensions, preserving tractability. The role of $p_\texttt{U}$ is to enact a one-step Bayesian posterior update on $\boldsymbol{\theta}$: at early iterations, when $\alpha_i$ is small, $p_\texttt{U}$ is broad and admits substantial parameter movement, whereas at later iterations, larger $\alpha_i$ makes the update sharply concentrated, refining $\boldsymbol{\theta}$ toward the values most consistent with the clean data. Thus $p_\texttt{U}$ provides the formal bridge between each noisy message and the progressively more informative input distribution maintained by the network.

\subsection{Modality-Agnostic Representation and Progressive Refinement}
Because BFN operate on distribution parameters rather than raw data, the same framework applies uniformly to discrete, discretized, and continuous modalities.  For example, discrete data are represented by categorical distribution parameters (probabilities), which lie on a probability simplex and thus provide continuous inputs to the network.  In fact, the parameters of a categorical distribution are real-valued probabilities, so the inputs to the network are continuous even when the data is discrete.  Likewise, continuous data use Gaussian or other continuous distributions.  In all cases the model processes continuous-valued parameters, avoiding discontinuities common in discrete diffusion models. This modality-agnostic formulation means that the same Bayes-and-network machinery can generate text, images, or other data with only minimal adaptation.

\subsection{Generative Sampling Process}
After training, sample generation proceeds by iteratively refining the distribution parameters via Bayesian updates. Starting from an initial parameter $\boldsymbol{\theta}_0$ (e.g., a prior at initial noise level $t_0$), the model performs $N$ iterations of the following procedure for $i = 1, \dots, N$:

\begin{enumerate}
\item \textbf{Sample from output distribution:} An intermediate sample $\mathbf{m}^{\prime}_i$ is drawn from the output distribution $\mathbf{m}^{\prime}_i \sim p_\texttt{O}(\cdot\mid\boldsymbol{\theta}_{i-1},t_{i})$ given the current parameter $\boldsymbol{\theta}_{i-1}$ and scheduled noise level $t_i$.

\item \textbf{Sample from sender distribution:} A noisy observation $\mathbf{y}_i$ is then drawn from the sender distribution, conditional on $\mathbf{m}^{\prime}_i$, via 
 $\mathbf{y}^{\prime}_i \sim p_\texttt{S}(\cdot\mid\mathbf{m}^{\prime}_i,\alpha_i)$, where $\alpha_i$ is the accuracy (inverse noise variance) prescribed for step $i$.

 \item \textbf{Bayesian parameter update:} The distribution parameter is updated by incorporating the observation $\mathbf{y}_i$ through the Bayesian update function: $\displaystyle \boldsymbol{\theta}_{i}=h(\boldsymbol{\theta}_{i-1},\mathbf{y}_{i},\alpha_i)$.
 Here $h(\boldsymbol{\theta}_{i-1}, \mathbf{y}_i, \alpha_i)$ computes the posterior parameter after observing $y_i$ with precision $\alpha_i$, given the prior $\boldsymbol{\theta}_{i-1}$.
\end{enumerate}

Repeating the above steps yields a final parameter $\boldsymbol{\theta}_N$ after $N$ iterations. This $\boldsymbol{\theta}_N$ characterizes a highly concentrated distribution (approximately a Dirac delta distribution in the limit of large $N$), from which the final data sample can be obtained by drawing $\mathbf{m} \sim p_\texttt{O}(\cdot \mid \mathbf{\theta}_N, t_N)$. Importantly, the receiver distribution $p_\texttt{R}$ is not explicitly used during generation – its effect is implicitly achieved by the two-step sampling ($p_\texttt{O}$ followed by $p_\texttt{S}$) at each iteration. This ensures that sampling relies solely on the forward update pattern $\mathbf{m}^{\prime}_i \to \mathbf{y}_i \to \boldsymbol{\theta}_i$ described above, in line with the canonical BFN \citep{graves2023bayesian} formulation. A simplified schematic illustration of this is provided in Figure \ref{fig:apeend_bu}.

\section{Score-Driven Sampling in DDPM vs.\ BFN: A Rigorous Comparison}
\label{app:sec:comparision_diff_bfn}
\subsection{Continuous Variable Modeling}
\textbf{Diffusion based sampling}
\begin{align}
\mathbf{x}_{t-1}&=\frac{1}{\sqrt{\alpha_t}} \mathbf{x}_t-\frac{1-\alpha_t}{\sqrt{1-\bar{\alpha}_t} \sqrt{\alpha_t}} \boldsymbol{\epsilon}_0+\sigma_t \mathbf{z} \nonumber \\
&=\frac{\sqrt{\alpha_t}\left(1-\bar{\alpha}_{t-1}\right)}{1-\bar{\alpha}_t}\mathbf{x}_t
+\frac{\sqrt{\bar{\alpha}_{t-1}}\left(1-\alpha_t\right)}{1-\bar{\alpha}_t}\mathbf{x}_0+\sigma_t \mathbf{z}\nonumber \\
&=\frac{1}{\sqrt{\alpha_t}} \mathbf{x}_t+\frac{1-\alpha_t}{\sqrt{\alpha_t}} \nabla_{\mathbf{x}_t} \log p\left(\mathbf{x}_t\right)+\sigma_t \mathbf{z} \nonumber \\
\text{guidance} &\rightarrow\frac{1}{\sqrt{\alpha_t}} \mathbf{x}_t+\frac{1-\alpha_t}{\sqrt{\alpha_t}} \nabla_{\mathbf{x}_t} \log p\left(\mathbf{x}_t\mid y\right)+\sigma_t \mathbf{z} \\
&=\frac{1}{\sqrt{\alpha_t}} \mathbf{x}_t
+\frac{1-\alpha_t}{\sqrt{\alpha_t}} \nabla_{\mathbf{x}_t} \log p\left(\mathbf{x}_t\right)
+\frac{1-\alpha_t}{\sqrt{\alpha_t}} \nabla_{\mathbf{x}_t} \log p\left(y\mid \mathbf{x}_t \right)
+\sigma_t \mathbf{z} \\
&\simeq\frac{1}{\sqrt{\alpha_t}} \mathbf{x}_t
+\frac{1-\alpha_t}{\sqrt{\alpha_t}} \nabla_{\mathbf{x}_t} \log p\left(\mathbf{x}_t\right)
+\frac{1-\alpha_t}{\sqrt{\alpha_t}} \nabla_{\mathbf{x}_t} \log p\left(y\mid \mathbf{x}_0 \right)
+\sigma_t \mathbf{z} 
\end{align}
\textbf{BFN based sampling}
\begin{align}
\boldsymbol{\theta}_{i}&
=\frac{\rho_{i-1}}{\rho_{i}}\boldsymbol{\theta}_{i-1} 
+\frac{\alpha_{i}}{\rho_{i}}\mathbf{y} 
\\&=\frac{\rho_{i-1}}{\rho_i} \boldsymbol{\theta}_{i-1} + \frac{\alpha}{\rho_i} \mathbf{x}_0+\frac{1}{\rho_i} \nabla_{\mathbf{x}_0} \log p(\mathbf{x}_0)
\\\text{guidance}&\rightarrow\frac{\rho_{i-1}}{\rho_i} \boldsymbol{\theta}_{i-1} + \frac{\alpha}{\rho_i} \mathbf{x}_0+\frac{1}{\rho_i} \nabla_{\mathbf{x}_0} \log p(\mathbf{x}_0\mid y)
\\&=\frac{\rho_{i-1}}{\rho_i} \boldsymbol{\theta}_{i-1} 
+ \frac{\alpha}{\rho_i} \mathbf{x}_0+\frac{1}{\rho_i} \nabla_{\mathbf{x}_0} \log p(\mathbf{x}_0)
+ \frac{1}{\rho_i} \nabla_{\mathbf{x}_0} \log p(y\mid\mathbf{x}_0)
\\&=\frac{\rho_{i-1}}{\rho_{i}}\boldsymbol{\theta}_{i-1} 
+\frac{\alpha_{i}}{\rho_{i}}\mathbf{y}  + \frac{1}{\rho_i} \nabla_{\mathbf{x}_0} \log p(y\mid\mathbf{x}_0)
\end{align}

\subsection{Categorical Variable Modeling}
\textbf{Diffusion based sampling}
\begin{align}
     &\mathbf{x}_{t-1} \sim \mathcal{C} \left( \mathbf{x}_{t-1} \mid \boldsymbol{\theta}_{\text{post}}(\mathbf{x}_t, \hat{\mathbf{x}}_0) \right) \\
     &\text{, where }  
      \boldsymbol{\theta}_{\text{post}}(\mathbf{x}_t, \hat{\mathbf{x}}_0) = \frac{ \left[ \alpha_t \mathbf{x}_t + \frac{1 - \alpha_t}{K} \right] \odot \left[ \bar{\alpha}_{t-1} \hat{\mathbf{x}}_0 + \frac{1 - \bar{\alpha}_{t-1}}{K} \right] }{ \sum_{k=1}^K \left( \left[ \alpha_t \mathbf{x}_t + \frac{1 - \alpha_t}{K} \right] \odot \left[ \bar{\alpha}_{t-1} \hat{\mathbf{x}}_0 + \frac{1 - \bar{\alpha}_{t-1}}{K} \right] \right)_k }
\end{align}
Here, \(\odot\) denotes element-wise multiplication, ensuring the probabilities are normalized over all categories. And, $\mathcal{C}$ denotes categorical distribution to model discrete atom types $\mathbf{v}$.



\textbf{BFN based Sampling}
\begin{align}
\boldsymbol{\theta}_{i}
&= \textrm{Softmax}(e^\mathbf{y}\cdot\boldsymbol{\theta}_{i-1})  
\\& =\textrm{Softmax}(e^{\alpha \left(K \cdot \mathbf{e}_\mathbf{x} - \mathbf{1}\right) + \nabla_{\mathbf{e}_\mathbf{x}} \log p(\mathbf{e}_\mathbf{x})}\cdot\boldsymbol{\theta}_{i-1})
\\\text{guidance}&\rightarrow \textrm{Softmax}(e^{\alpha \left(K \cdot \mathbf{e}_\mathbf{x} - \mathbf{1}\right) + \nabla_{\mathbf{e}_\mathbf{x}} \log p(\mathbf{e}_\mathbf{x} \mid \mathtt{l})}\cdot\boldsymbol{\theta}_{i-1})
\\ & = \textrm{Softmax}\left(
{{e}^{\mathbf{y}}\cdot\boldsymbol{\theta}_{i-1}^\mathbf{v}}\cdot \,\,\, {e^{\mathbf{\nabla_{\mathbf{e}_\mathbf{x}}} \log p(\mathtt{l}\mid \mathbf{e}_\mathbf{x})} }
\right)
\end{align}

\begin{figure*}[!b]
    \centering
    \includegraphics[width=0.8\linewidth]{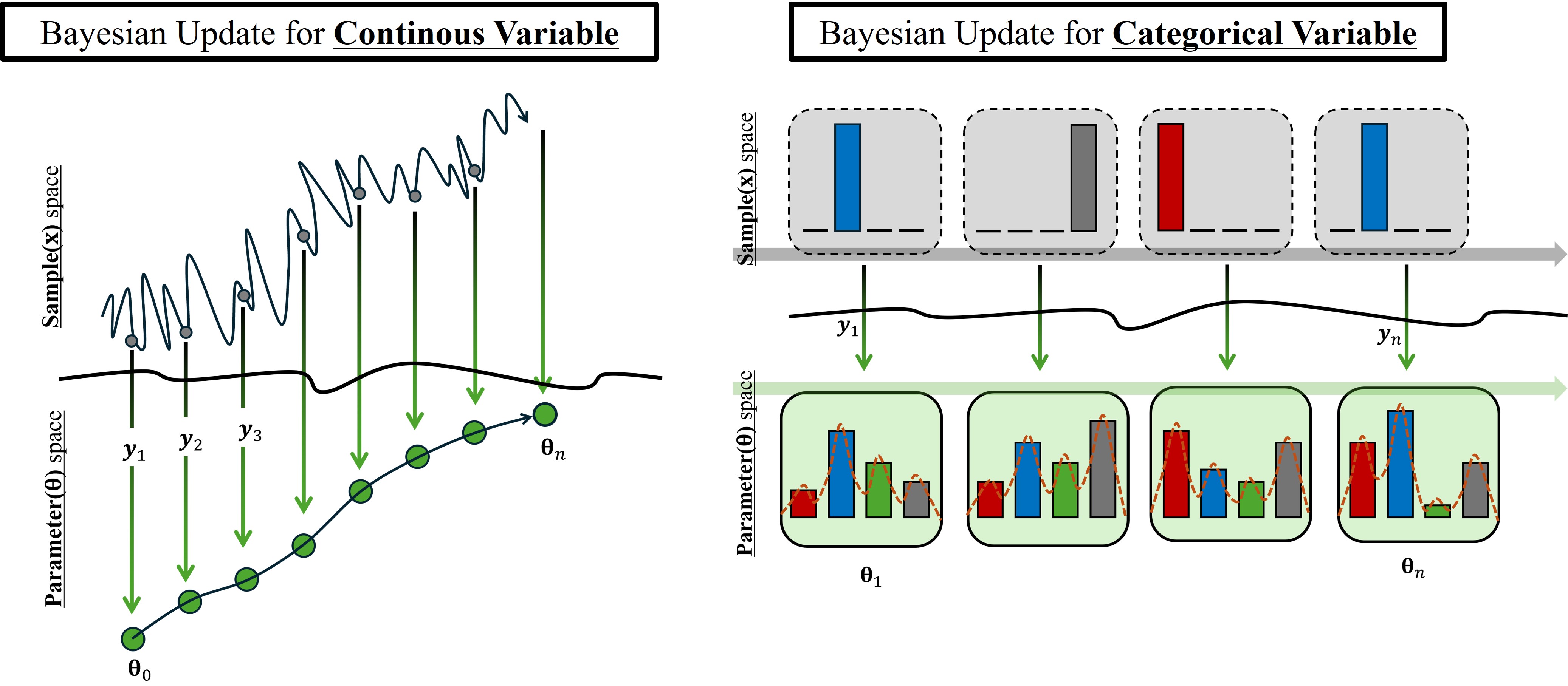}
    \caption{Schematic illustration of Bayesian updates for each variable type}
    \label{fig:apeend_bu}
\end{figure*}

\subsection{Theoretical Distinctions between Diffusion Models and Bayesian Flow Networks}

Bayesian Flow Networks (BFN) differ fundamentally from Diffusion Models (DMs) in their mechanism of uncertainty injection and the domain of parameter updates. Unlike diffusion-based generative processes, which explicitly manipulate the data sample $x$ and inject Gaussian noise at each step to maintain stochasticity, BFN iteratively update distribution parameters $\theta$ through Bayesian inference, integrating noisy observations $y_i$ drawn from a sender distribution $p_{\mathtt{S}}(y \mid x; \alpha_i)$.

In contrast to diffusion methods, BFN have no explicit forward noise injection process. Instead, the generation is achieved by a deterministic parameter update conditional upon the observed sample $y_i$, implicitly encoding uncertainty through the random sampling of $y_i$. Specifically, for continuous variables, the sender distribution $p_{\mathtt{S}}(y_i \mid x; \alpha_i)$ is typically Gaussian, with precision (inverse variance) controlled by $\alpha_i$. A small value of $\alpha_i$ implies high noise, rendering the observed value $y_i$ nearly independent of the true data $x$, whereas a large $\alpha_i$ indicates a highly informative (low-noise) observation. Formally, the BFN update can be expressed as:
\begin{align}
    \theta_{i} = \left(1 - \frac{\alpha_{i}}{\rho_{i}}\right)\theta_{i-1} + \frac{\alpha_{i}}{\rho_{i}} y_{i}, \quad \text{where} \quad \rho_i = \rho_{i-1} + \alpha_i.
\end{align}
This update rule can be directly interpreted through the lens of Kalman filtering for scalar Gaussian models. In this analogy, each update of $\theta_i$ is drawn toward the observed sample $y_i$ proportionally to the precision parameter $\alpha_i$, thus progressively reducing uncertainty as $\rho_i$ increases. Given the sampled $y_i$, the Bayesian update function remains strictly deterministic, and no additional random noise is explicitly introduced beyond the implicit stochasticity already present in $y_i$.

This principle of implicit uncertainty introduction also extends naturally to categorical variables. Suppose the parameter vector $\theta_i$ represents a categorical probability distribution over $K$ classes, i.e., $\theta_i = (p_i^{(1)}, p_i^{(2)}, \dots, p_i^{(K)})$. In this case, the sender distribution can be viewed as emitting symbols $y_i$ according to a confusion matrix parameterized by $\alpha_i$. Specifically, we have:
\begin{align}
p_{\mathtt{S}}(y_i = c \mid x = k; \alpha_i) \approx
\begin{cases}
\frac{1}{K}, &\text{when } \alpha_i = 0 \quad (\text{uninformative observation}),\\[6pt]
\mathbf{1}_{c=k}, &\text{as } \alpha_i \rightarrow \infty \quad (\text{perfectly informative}).
\end{cases}
\end{align}
Upon observing the symbol $y_i = c$, the Bayesian update for the parameter $\theta_i$ follows:
\begin{align}
    \theta_{i+1}(j) \propto \theta_i(j) \cdot \underset{\rho_i}{\operatorname{Pr}}[Y=c \mid X=j].
\end{align}
Although $y_i$ is originally sampled as a continuous vector from a Gaussian distribution centered around a one-hot encoding of the true class, the Bayesian update interpretation treats $y_i$ as if it were decoded into a discrete class index $c = \arg\max_k y_k$. For rigorous derivation and proof of this decoding interpretation, we refer the reader to Graves et al.~\cite{graves2023bayesian}. The resulting posterior update normalizes probabilities over all classes $j$, illustrating explicitly how the sampled observation $y_i$ reweights prior probabilities $\theta_i(j)$ based on the likelihood of observing $y_i$ under each class hypothesis $X=j$. Thus, the sampled variable $y_i$ serves directly as an informative mediator driving the parameter dynamics.

Unlike the arbitrary Gaussian noise $z$ used in diffusion processes, the sampled variable $y_i$ in BFN is inherently linked to the unknown true data $x$ through the sender distribution $p_{\mathtt{S}}(y_i \mid x)$. As the inference procedure progresses (i.e., as $i$ increases and noise decreases), $y_i$ provides increasingly precise information, causing the parameter $\theta_i$ to converge toward a distribution sharply concentrated around the true data. This iterative Bayesian updating mechanism effectively transfers information from the sample space into the parameter space.

In summary, two critical distinctions between BFN and diffusion-based generative methods emerge from our theoretical analysis:

\begin{enumerate}
    \item \textbf{Implicit versus Explicit Noise Injection:} BFN deterministically update parameters given noisy observations, implicitly capturing uncertainty. In contrast, diffusion models explicitly add random noise to samples at each step to maintain stochasticity.

    \item \textbf{Parameter-space versus Sample-space Updates:} Diffusion models perform both training and inference entirely within the sample space. Conversely, BFN operate fundamentally within the parameter space, integrating information from the sample space through occasional noisy observations, thus creating a natural integration of information between the two spaces.
\end{enumerate}
\section{Related Work}
\label{app:sec:rw}
\subsection{Molecule Generation in Structure-based Drug Design}  
With the rapid accumulation of protein structural data, generative methods for molecule design have become increasingly important in structure-based drug discovery. Initial approaches, such as \citep{skalic2019target}, employed sequence-based generative models to produce SMILES representations informed by protein binding sites. However, the advent of powerful geometric and 3D modeling methods has shifted the paradigm toward directly constructing molecules within three-dimensional spaces. For example, \citep{ragoza2022generating} represented molecules using voxelized atomic density grids and utilized Variational Autoencoders (VAEs) for molecular synthesis. Meanwhile, \citep{luo20213d}, \citep{peng2022pocket2mol}, and \citep{powers2024} introduced sequential, autoregressive approaches for placing atoms or functional groups step-by-step into target binding pockets. Building upon these autoregressive techniques, fragment-based strategies, as seen in FLAG \citep{zhang2023molecule} and DrugGPS \citep{zhang2023learning}, integrated chemically meaningful fragments, thereby improving the structural realism of generated ligands.

In parallel, diffusion-based generative methods have emerged, achieving remarkable success across various generative tasks such as image and text synthesis. Adaptations of these models in molecular contexts, exemplified by recent studies \citep{guan20233d,schneuing2022structure,huang2023protein,guan2024decompdiff}, iteratively refine atom identities and coordinates, leveraging symmetry-preserving architectures such as SE(3)-equivariant neural networks to ensure chemical validity and structural accuracy.

Despite substantial advancements, current generative frameworks frequently encounter difficulties in simultaneously optimizing multiple pharmacologically relevant properties, including binding affinity, synthetic accessibility, and low toxicity. In practical drug development scenarios, the simultaneous control and optimization of these attributes are typically mandatory requirements rather than optional criteria \citep{d2012multi}. Thus, developing generative strategies capable of effectively incorporating multiple property constraints remains a critical challenge in the field.

\subsection{Optimization based Molecule Generation in Structure-based Drug Design}
Recent research in molecular generative modeling has moved beyond approximating training data distributions toward explicit optimization strategies aimed at producing molecules with desirable properties, such as high target protein binding affinity and synthetic accessibility (SA).

For example, RGA \citep{fu2022reinforced} employs genetic algorithms specifically tailored to structure-based drug design (SBDD), explicitly incorporating target protein structures into molecular optimization. DecompOpt \citep{zhoudecompopt} combines a pre-trained, structure-aware equivariant diffusion model to initially identify suitable molecular substructures complementary to target binding pockets, followed by a docking-based greedy iterative optimization to enhance binding affinity. TacoGFN \citep{shen2023tacogfn} leverages generative flow networks (G-FlowNets) to identify pharmacophoric interactions with target proteins, subsequently utilizing reinforcement learning to optimize binding affinity and synthetic accessibility. Additionally, ALIDiff \citep{gu2024aligning} introduces a preference-based optimization method that aligns pre-trained generative models to specified molecular properties through fine-tuning.

Despite their demonstrated effectiveness, these approaches share an intrinsic limitation: since optimization methods are tightly integrated into their training processes, adapting the models to new property requirements inevitably necessitates retraining. In contrast, our proposed approach distinguishes itself by leveraging a pre-trained generative model, enabling effective multi-property optimization directly through modifications at sampling time, thus avoiding costly retraining and enhancing flexibility in targeting diverse pharmacologically relevant properties.

\subsection{Bayesian Flow Network}
Recently, Bayesian Flow Network (BFN) have gained attention as effective models for protein sequence modeling \citep{atkinson2025protein} and molecular structure generation \citep{song2023unified,qu2024molcraft}, demonstrating promising capabilities particularly in the generation of realistic three-dimensional molecular structures. Despite these advancements, the development of controllable generation methods within BFN, aimed at optimizing diverse molecular properties required for viable drug candidates, remains largely unexplored. Thus, establishing a theoretical link between generative models based on maximum likelihood estimation, such as diffusion models and BFN, and subsequently formulating gradient-guidance strategies within the BFN framework, represents an essential step forward. Such advancements could significantly inform future directions in the field of structure-based drug design.

\section{Implement Detail}
\label{app:sec:sampling}
\subsection{Predictor with Bayesian Neural Network for Uncertainty}
To achieve property-driven generation without retraining the generative model, we introduce an external Bayesian neural network (BNN) predictor modeling the conditional distribution $p(\mathtt{l}\mid \mathbf{m}, \mathbf{p})$ as a Gaussian distribution 
$\mathcal{N}\big(\mathtt{l}; \boldsymbol{\mu}(\mathbf{m},\mathbf{p}), \boldsymbol{\sigma}(\mathbf{m},\mathbf{p})^2\big)$
with mean $\boldsymbol{\mu}_{\vartheta}(\mathbf{m},\mathbf{p})$ and variance $\boldsymbol{\sigma}_{\vartheta}(\mathbf{m},\mathbf{p})^2$. Unlike point estimates, this BNN provides uncertainty-aware predictions for properties (e.g., binding affinity), enabling informed guidance. During molecule generation, the gradient $\nabla_{\mathbf{m}} \log p(\mathtt{l}\mid\mathbf{m},\mathbf{p})$, which incorporates predictive uncertainty, guides the generative process. This uncertainty integration allows the model to appropriately moderate its guidance, preventing overly confident predictions toward regions unsupported by the predictor.

We approximate the overall predicted label distribution as Gaussian with predictive mean and variance computed from these samples. In particular, the mean is $\hat{\boldsymbol{\mu}}_{\vartheta}(\mathbf{x},\mathbf{p}) = M^{-1}\sum_{i=1}^M \boldsymbol{\mu}_{\vartheta,i}(\mathbf{x},\mathbf{p})$, and the predictive variance $\hat{\boldsymbol{\sigma}}_{\vartheta}^2(\mathbf{x},\mathbf{p})=M^{-1} \sum_i\left(\boldsymbol{\sigma}_{{\vartheta},i}^2(\mathbf{x})+\boldsymbol{\mu}_i^2(\mathbf{x})\right)-\boldsymbol{\mu}_{{\vartheta},i}^2(\mathbf{x})$ combines the average of the BNN’s output variances with the variance of its output means. Using the law of total variance, we decompose the predictive uncertainty into aleatoric and epistemic components:
\begin{equation}\label{eq:variance-decomp}
\begin{aligned}
\boldsymbol{\sigma}_{\vartheta}^2(\mathbf{m},\mathbf{p}) & =M^{-1} \sum_i \boldsymbol{\sigma}_{\vartheta,i}^2(\mathbf{m},\mathbf{p})+M^{-1} \sum_i \boldsymbol{\mu}_i^2(\mathbf{m},\mathbf{p})-\boldsymbol{\mu}_{\vartheta,i}^2(\mathbf{m},\mathbf{p}) \\
& =\mathbb{E}\left[\boldsymbol{\sigma}_{\vartheta}^2(\mathbf{m},\mathbf{p})\right]+\mathbb{E}\left[\boldsymbol{\mu}^2(\mathbf{m},\mathbf{p})\right]-\mathbb{E}\left[\boldsymbol{\mu}_{\vartheta}(\mathbf{m},\mathbf{p})\right]^2 \\
& =\underbrace{\mathbb{E}\left[\boldsymbol{\sigma}_{\vartheta}^2(\mathbf{m},\mathbf{p})\right]}_{\text{Aleatoric Uncertainty}} + \underbrace{\operatorname{Var}\left[\boldsymbol{\mu}(\mathbf{m},\mathbf{p})\right]}_{\text{Epistemic Uncertainty}}
\end{aligned}
\end{equation}
where $\vartheta$ represents the BNN’s parameters (random due to the weight posterior). The first term $\boldsymbol{\sigma}_{\text{aleatoric}}^2$ is the expected predictive variance (reflecting inherent noise or irreducible uncertainty in the property given $(\mathbf{x},\mathbf{p})$), while the second term $\boldsymbol{\sigma}_{\text{epistemic}}^2$ is the variance of the predicted means (reflecting uncertainty in the model parameters due to limited training data). 
Our \method can thus modulate the influence of the guidance signal in proportion to the predictor’s confidence, improving robust controllability.

We train the property predictor on an external dataset of protein–ligand complexes, allowing it to learn a mapping from 3D structures to property values independent of the generative model. In particular, we use the CrossDocked2020 dataset \citep{francoeur2020three} to train the predictor. Each training sample provides a protein structure $\mathbf{p}$, a ligand $\mathbf{m}$, and a ground-truth label $\mathbf{l}$ (e.g., an experimental or docking-derived affinity score). We optimize the BNN by maximizing the likelihood of the true labels under its predicted Gaussian distribution. The negative log-likelihood (NLL) loss for a single sample is given by (For notational simplicity, we omit $\mathbf{p}$):
\begin{equation}\label{eq:nll-loss}
\mathcal{L}_\text{NLL}\left( \mathbf{y}_n, \mathbf{x}_n\right)=\frac{\log \mathbf{\boldsymbol{\sigma}}_{\vartheta}^2\left(\mathbf{x}_n\right)}{2}
+\frac{\left(\boldsymbol{\mu}_{\vartheta}\left(\mathbf{x}_n\right)-\mathbf{y}_n\right)^2}{2 \boldsymbol{\sigma}_{\vartheta}^2\left(\mathbf{x}_n\right)},
\end{equation}
where we omit the constant $\frac{1}{2}\log(2\pi)$ for brevity. In addition to the standard NLL, we also employ a $\beta$-weighted NLL variant (denoted $\beta$-NLL) to improve training stability in the presence of heteroscedastic uncertainty \citep{seitzer2022pitfalls}, as follow:
\begin{equation}\label{eq:beta-nll-loss}
\mathcal{L}_{\beta-\text{NLL}}\left(\mathbf{y}_n, \mathbf{x}_n\right)=\operatorname{stop}\left(\mathbf{\sigma}^{2 \beta}\right) \mathcal{L}_\text{NLL}\left(\mathbf{y}_n, \mathbf{x}_n\right),
\end{equation}
for some hyperparameter $\beta > 0$. Setting $\beta=0$ recovers the original NLL loss, while a positive $\beta$ increases the relative penalty for large predicted variances. 

\subsection{Implement detail}
The generative backbone architecture employed in our proposed CByG framework directly inherits the architecture and pretrained weights from MolCRAFT \citep{qu2024molcraft}. For the property predictor backbone, we adapted the graph transformer architecture from TargetDiff \citep{guan20233d} by removing the equivariant head, resulting in a novel SE(3)-invariant graph transformer. This design choice naturally aligns with the inherent symmetry of protein-ligand complexes, where binding affinity and synthetic accessibility scores remain invariant to rotations and translations.

Specifically, the property predictor comprises 16 attention heads, with each attention block consisting of three SE(3)-invariant layers featuring a hidden dimension of 64. Key and value embeddings are generated through a two-layer MLP. Layer normalization is uniformly applied throughout the network, and the Swish activation function is adopted. The learning rate is exponentially decayed by a factor of 0.6, with a lower bound set at $1 \times 10^{-6}$. This decay is triggered if no improvement is observed in the validation loss for ten consecutive evaluations, which occur every 1000 training steps. Finally, the property predictor's scoring function is defined as $\text{Score}=\left(\frac{\mathrm{DS}}{-20} \times \mathrm{SA}\right)$, where $\text{DS}$ means docking score and $\text{SA}$ means synthetic accessibility score.
\subsection{Sampling}
\begin{algorithm}
\caption{Sampling Procedure \method}
\begin{algorithmic}[1]
\State \textbf{Input:} 
\\ Protein pocket $\mathbf{p}$, 
\\ Pre-trained \texttt{output} network:
\\ $\quad p_{\mathtt{O}}(\mathbf{m}\mid\boldsymbol{\theta}_{i-1}, \mathbf{p}; t) \leftarrow \Psi_{\texttt{output}}(\boldsymbol{\theta}_{i-1},\mathbf{p}, t_i)$,
\\ Property predictor network providing property predictions and uncertainty:
\\ $\quad p_{\mathtt{l}}(\mathtt{l}\mid\mathbf{m},\mathbf{p}) \leftarrow \mathcal{N}\left(\mathtt{l};\boldsymbol{\mu}_{\vartheta}(\mathbf{m},\mathbf{p}),\boldsymbol{\sigma}_{\vartheta}(\mathbf{m},\mathbf{p})^2\right)$,
\\ Pre-defined precision schedule for coordinate and type $(\alpha_{\mathbf{x},i},\alpha_{\mathbf{v},i})$ according to \citep{graves2023bayesian},
\\ $\text{Guidance scale for coordinate and type} \leftarrow \lambda_{\mathbf{x}}, \lambda_{\mathbf{v}}$
\\ $\boldsymbol{\theta}_0^{\mathbf{x}},\boldsymbol{\theta}_0^{\mathbf{v}}
\leftarrow\mathbf{0},\left[\frac{1}{K}\right]_{\mathbb{N}_\text{M}\times K}$
\For{$i=1$ \text{to} $N$}
    \State $t \leftarrow \frac{i-1}{n}$
    \State $\mathbf{m}:[\mathbf{\hat{x}},\mathbf{\hat{v}}] \sim p_{\mathtt{O}}(\mathbf{m}\mid\boldsymbol{\theta}_{i-1},\mathbf{p};t)$
    \State $\mathbf{y}_{\mathbf{x},i}\sim p_\mathtt{s}(\mathbf{y}_i\mid\mathbf{\hat{x}};\alpha_{\mathbf{x},i})$
    \State $\mathbf{y}_{\mathbf{v},i}\sim p_\mathtt{s}(\mathbf{y}_i\mid\mathbf{\hat{v}};\alpha_{\mathbf{v},i})$
    \State $\boldsymbol{\mu}_{\vartheta}, \boldsymbol{\sigma}_{\vartheta}^2 \leftarrow p_{\mathtt{l}}(\mathtt{l}\mid\mathbf{m},\mathbf{p})$ \Comment{Mean and uncertainty from property predictor}
    \State $\boldsymbol{\theta}_i^{\mathbf{x}}\leftarrow\frac{\alpha_i}{\rho_i}\mathbf{y}_{\mathbf{x},i}+\frac{\rho_{i-1}}{\rho_i}\boldsymbol{\theta}_{i-1}^{\mathbf{x}}+\boldsymbol{\sigma}_{\vartheta}^2 \cdot \lambda_{\mathbf{x}}\cdot \frac{1}{\rho_i}\nabla_{\mathbf{x}}\log p_{\mathtt{l}}(\mathtt{l}\mid[\mathbf{\hat{x}},\mathbf{\hat{v}}],\mathbf{p})$
    
    \State $\begin{aligned}[t]
    \boldsymbol{\theta}_i^{\mathbf{v}}\leftarrow&\operatorname{Softmax}\left(e^{\mathbf{y}_{\mathbf{v},i}}\cdot\boldsymbol{\theta}_{i-1}^{\mathbf{v}}\cdot e^{\mathbf{h}}\right)\\
    &\text{where }\mathbf{h}=\boldsymbol{\sigma}_{\vartheta}^2 \cdot \lambda_{\mathbf{v}}\cdot\nabla_{\mathbf{e}_{\mathbf{v}}}\log p_{\mathtt{l}}(\mathtt{l}\mid[\mathbf{\hat{x}},\mathbf{\hat{v}}],\mathbf{p}), \mathbf{e}_{\mathbf{v}}=\operatorname{GumbelSoftmax}(\mathbf{\hat{v}})
    \end{aligned}$
\EndFor
\State $\mathbf{m}\sim p_{\mathtt{O}}(\mathbf{m}\mid\boldsymbol{\theta}_{N},\mathbf{p};t)$
\State \textbf{return} $[\mathbf{\hat{x}},\mathbf{\hat{v}}]$
\end{algorithmic}
\end{algorithm}

\newpage
\subsection{SE(3)-Equivariance}
Since our proposed guidance injection strategy is integrated into an SE(3)-equivariant generative model, the designed guidance itself must inherently preserve the SE(3)-equivariance property. As described by \citep{schneuing2024structure}, aligning the protein-ligand complex to center the pocket at the origin removes translation equivariance, requiring only O(3)-equivariance.
 The following provides a formal proof verifying this property.

\begin{proposition}
    Suppose the property preditor $\Psi_{\textup{\texttt{prop}}}([\mathbf{x}_\textup{\text{M}},\mathbf{v}_\textup{\text{M}}],[\mathbf{x}_\textup{\text{P}},\mathbf{v}_\textup{\text{P}}]) \leftarrow p_{\mathtt{l}}(\mathtt{l}\mid\mathbf{m},\mathbf{p})$ is invariant such that $\Psi_{\textup{\texttt{prop}}}([\mathbf{x}_\textup{\text{M}},\mathbf{v}_\textup{\text{M}}],[\mathbf{x}_\textup{\text{P}},\mathbf{v}_\textup{\text{P}}]) = \Psi_{\textup{\texttt{prop}}}([T_g(\mathbf{x}_\textup{\text{M}}),\mathbf{v}_\textup{\text{M}}],[T_g(\mathbf{x}_\textup{\text{P}}),\mathbf{v}_\textup{\text{P}}])$. Denoting $T_g$ as the group of O(3)-transformation, $T_g(\mathbf{x})=\mathbf{R}\mathbf{x}$, where $\mathbf{R} \in \mathbb{R}^{3 \times 3}$ is the rotation matrix, and $\mathbf{b} \in \mathbb{R}^{3}$ is the translation vector. Then, gradient guidance function is orthogonal equivariant such that $\nabla_{\mathbf{x}_\textup{\text{M}}} \Psi_{\textup{\texttt{prop}}}(T_g(\mathbf{x}_\textup{\text{M}}),T_g(\mathbf{x}_\textup{\text{P}})) = T_g(\nabla_{\mathbf{x}_\textup{\text{M}}} \Psi_{\textup{\texttt{prop}}}(\mathbf{x}_\textup{\text{M}},\mathbf{x}_\textup{\text{P}}))$. Variables corresponding to type $\mathbf{v}$ are omitted from the notation, as they remain unaffected by O(3) transformations.
\end{proposition}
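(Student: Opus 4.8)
The plan is to reduce the claim to the elementary fact that the gradient of an orthogonally invariant scalar field is orthogonally equivariant, which follows by combining the chain rule with the identity $\mathbf{R}^{-1}=\mathbf{R}^\top$ available for $\mathbf{R}\in O(3)$. First I would fix the reading of the notation: I interpret $\nabla_{\mathbf{x}_\text{M}}\Psi_{\texttt{prop}}(T_g(\mathbf{x}_\text{M}),T_g(\mathbf{x}_\text{P}))$ as the partial gradient of $\Psi_{\texttt{prop}}$ with respect to its molecular-coordinate slot, \emph{evaluated at the rotated configuration}, rather than as the gradient of the composite map $\mathbf{x}_\text{M}\mapsto\Psi_{\texttt{prop}}(T_g(\mathbf{x}_\text{M}),T_g(\mathbf{x}_\text{P}))$. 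I would also record that $T_g$ acts row-wise on the coordinate matrix $\mathbf{x}_\text{M}\in\mathbb{R}^{N_\text{M}\times 3}$ through the same $\mathbf{R}$, so that on the stacked coordinate vector it is realised by a block-diagonal matrix whose blocks are all equal to $\mathbf{R}$; being a direct sum of orthogonal blocks, this stacked map is itself orthogonal, which is the only structural property the argument uses.

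Next I would differentiate the invariance identity. Writing $F(\mathbf{a},\mathbf{b}):=\Psi_{\texttt{prop}}([\mathbf{a},\mathbf{v}_\text{M}],[\mathbf{b},\mathbf{v}_\text{P}])$ and holding the transformation-inert atom types fixed, the hypothesis reads $F(\mathbf{R}\mathbf{a},\mathbf{R}\mathbf{b})=F(\mathbf{a},\mathbf{b})$ for every $\mathbf{R}\in O(3)$. Applying the chain rule to the left-hand side as a function of $\mathbf{a}$ gives $\nabla_{\mathbf{a}}\big[F(\mathbf{R}\mathbf{a},\mathbf{R}\mathbf{b})\big]=\mathbf{R}^\top(\nabla_{1}F)(\mathbf{R}\mathbf{a},\mathbf{R}\mathbf{b})$, where $\nabla_1$ denotes the gradient in the first slot; differentiating the right-hand side gives $(\nabla_1 F)(\mathbf{a},\mathbf{b})$. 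Equating the two and then left-multiplying by $\mathbf{R}$, using $\mathbf{R}\mathbf{R}^\top=\mathbf{I}$, yields $(\nabla_1 F)(\mathbf{R}\mathbf{a},\mathbf{R}\mathbf{b})=\mathbf{R}\,(\nabla_1 F)(\mathbf{a},\mathbf{b})$, which is exactly the asserted relation $\nabla_{\mathbf{x}_\text{M}}\Psi_{\texttt{prop}}(T_g(\mathbf{x}_\text{M}),T_g(\mathbf{x}_\text{P}))=T_g\big(\nabla_{\mathbf{x}_\text{M}}\Psi_{\texttt{prop}}(\mathbf{x}_\text{M},\mathbf{x}_\text{P})\big)$.

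Finally I would tie up the remaining pieces: the atom-type arguments $\mathbf{v}$ carry no geometric index and are untouched by $T_g$, so they pass through the computation as constants and may be suppressed exactly as the statement does; and since the complex is pre-centred at the pocket origin (as noted just before the statement, following Schneuing et al.), only the linear O(3) part survives and no translation term appears, so the orthogonality $\mathbf{R}^\top=\mathbf{R}^{-1}$ is genuinely available. I expect no real analytic obstacle here, since the entire content is a single chain-rule identity together with orthogonality. The one place demanding care — and the only plausible source of error — is the bookkeeping in the first step: committing to the ``gradient-in-the-first-slot, evaluated after rotation'' reading of the notation, and verifying that the Jacobian factor produced by the chain rule is precisely the orthogonal stacked map (and not its transpose, placed on the wrong side). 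Once that is pinned down, orthogonality converts invariance into equivariance in a single line.
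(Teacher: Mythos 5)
Your proposal is correct and follows essentially the same route as the paper's proof: differentiate the invariance identity $\Psi_{\texttt{prop}}(\mathbf{R}\mathbf{x}_\text{M},\mathbf{R}\mathbf{x}_\text{P})=\Psi_{\texttt{prop}}(\mathbf{x}_\text{M},\mathbf{x}_\text{P})$ with respect to $\mathbf{x}_\text{M}$, let the chain rule produce the Jacobian factor $\mathbf{R}^\top$, and left-multiply by $\mathbf{R}$ using $\mathbf{R}\mathbf{R}^\top=\mathbf{I}$ to obtain the equivariance relation. Your added bookkeeping (distinguishing the composite gradient from the first-slot gradient evaluated at the rotated configuration, and noting the block-diagonal orthogonal action on stacked coordinates) only makes explicit what the paper's notation leaves implicit.
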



\begin{proof}
Given the invariance of the property predictor $\Psi_{\textup{\texttt{prop}}}(\mathbf{x}_{\text{M}},\mathbf{x}_{\text{P}})$ under O(3) transformations, it follows that $\Psi_{\textup{\texttt{prop}}}(\mathbf{R}\mathbf{x}_{\text{M}},\mathbf{R}\mathbf{x}_\textup{\text{P}}) = \Psi_{\textup{\texttt{prop}}}(\mathbf{x}_{\text{M}},\mathbf{x}_{\text{P}})$. Differentiating both sides of the equation with respect to $\mathbf{x}_\text{M}$ yields:
\begin{align}
\Psi_{{\texttt{prop}}}(\mathbf{x}_{\text{M}},\mathbf{x}_{\text{P}}) &= \Psi_{{\texttt{prop}}}(\mathbf{R}\mathbf{x}_{\text{M}},\mathbf{R}\mathbf{x}_{\text{P}}) \\
\nabla_{\mathbf{x}_\text{M}}\Psi_{{\texttt{prop}}}(\mathbf{x}_{\text{M}},\mathbf{x}_{\text{P}}) &= \nabla_{\mathbf{x}_\text{M}}\Psi_{{\texttt{prop}}}(\mathbf{R}\mathbf{x}_{\text{M}},\mathbf{R}\mathbf{x}_{\text{P}}) \nonumber\\
&= 
\left(\frac{\partial (\mathbf{R}\mathbf{x}_{\text{M}}) }{\partial \mathbf{x}_\text{M}}\right)^{\top}
\nabla_{\mathbf{x}_\text{M}}\Psi_{{\texttt{prop}}}(\mathbf{R}\mathbf{x}_{\text{M}},\mathbf{R}\mathbf{x}_{\text{P}}) \nonumber\\
&= 
\mathbf{R}^{\top}
\nabla_{\mathbf{x}_\text{M}}\Psi_{{\texttt{prop}}}(\mathbf{R}\mathbf{x}_{\text{M}},\mathbf{R}\mathbf{x}_{\text{P}}) \\
\mathbf{R}\nabla_{\mathbf{x}_\text{M}}\Psi_{{\texttt{prop}}}(\mathbf{x}_{\text{M}},\mathbf{x}_{\text{P}}) &= 
\mathbf{R}\mathbf{R}^{\top}\nabla_{\mathbf{x}_\text{M}}\Psi_{{\texttt{prop}}}(\mathbf{R}\mathbf{x}_{\text{M}},\mathbf{R}\mathbf{x}_{\text{P}}) \nonumber\\
&= 
\nabla_{\mathbf{x}_\text{M}}\Psi_{{\texttt{prop}}}(\mathbf{R}\mathbf{x}_{\text{M}},\mathbf{R}\mathbf{x}_{\text{P}})
\end{align}
Therefore, $\nabla_{\mathbf{x}_\text{M}}\Psi_{{\texttt{prop}}}(\cdot)$ exhibits equivariance under the transformation $T_g({\cdot})$, completing the proof.\qedhere
\end{proof}

\section{\textit{TargetOpt} Implement Detail}
\label{app:sec:diff_imp}
To evaluate the effectiveness of our proposed guidance injection within the Bayesian Flow Network framework, we implemented a comparable guidance method within a diffusion-based generative model. Specifically, we adopted the TargetDiff \citep{guan20233d} architecture, enhancing it with gradient-based guidance strategies. Two distinct gradient guidance approaches were considered: (1) gradient computation based on property prediction at arbitrary intermediate time steps $\mathbf{x}_t$, and (2) gradient computation via posterior sampling, leveraging property prediction at the fully denoised state $\hat{\mathbf{x}}_0$.

For fair comparison, the model utilized for property prediction at $\hat{\mathbf{x}}_0$ was identical to that employed in our \method model. Below, we present the unconditional denoising procedures used by TargetDiff for each variable type; additional details can be found in the original paper \citep{guan20233d}.
\begin{align}
q\left(\mathbf{x}_{t-1} \mid \mathbf{x}_t, \mathbf{x}_0\right)=\mathcal{N}\left(\mathbf{x}_{t-1} ; \tilde{\boldsymbol{\mu}}_t\left(\mathbf{x}_t, \mathbf{x}_0\right), \tilde{\beta}_t \mathbf{I}\right) , \quad q\left(\mathbf{v}_{t-1} \mid \mathbf{v}_t, \mathbf{v}_0\right)=\mathcal{C}\left(\mathbf{v}_{t-1} \mid \tilde{\boldsymbol{c}}_t\left(\mathbf{v}_t, \mathbf{v}_0\right)\right)
\end{align}
, where $\tilde{\boldsymbol{\mu}}_t\left(\mathbf{x}_t, \mathbf{x}_0\right)=\frac{\sqrt{\bar{\alpha}_{t-1}} \beta_t}{1-\bar{\alpha}_t} \mathbf{x}_0+\frac{\sqrt{\alpha_t}\left(1-\bar{\alpha}_{t-1}\right)}{1-\bar{\alpha}_t} \mathbf{x}_t, \tilde{\beta}_t=\frac{1-\bar{\alpha}_{t-1}}{1-\bar{\alpha}_t} \beta_t$, and $\tilde{\boldsymbol{c}}_t\left(\mathbf{v}_t, \mathbf{v}_0\right)=\boldsymbol{c}^{*} / \sum_{k=1}^K c_k^{*} \text { and } \boldsymbol{c}^{*}\left(\mathbf{v}_t, \mathbf{v}_0\right)=\left[\alpha_t \mathbf{v}_t+\left(1-\alpha_t\right) / K\right] \odot\left[\bar{\alpha}_{t-1} \mathbf{v}_0+\left(1-\bar{\alpha}_{t-1}\right) / K\right] .$ 

Under the scenario of property prediction at arbitrary intermediate time steps $\mathbf{x}_t$, the update procedure for atomic coordinate and type can be modified as follows.
\begin{align}
&\tilde{\boldsymbol{\mu}}_t\left(\mathbf{x}_t, \mathbf{x}_0\right)=\frac{\sqrt{\bar{\alpha}_{t-1}} \beta_t}{1-\bar{\alpha}_t} \mathbf{x}_0+\frac{\sqrt{\alpha_t}\left(1-\bar{\alpha}_{t-1}\right)}{1-\bar{\alpha}_t} \mathbf{x}_t+\frac{1-\alpha_t}{\sqrt{\alpha_t}} \nabla_{\mathbf{x}_t} \log p\left(\mathtt{l}\mid\boldsymbol{x}_t,\mathbf{p}\right)\\
&\tilde{\boldsymbol{c}}_t\left(\mathbf{v}_t, \mathbf{v}_0\right)=\left(\frac{\boldsymbol{c}^{*}}{\sum_{k=1}^K c_k^{*}}  + \boldsymbol{\delta}\right) \cdot e^{\nabla_{\mathbf{v}_{\boldsymbol{t}}} p\left(\mathtt{l} \mid\mathbf{v}_t,\mathbf{p}\right)}
\end{align}
Additionally, when employing posterior sampling for gradient computation, the gradient form can be reformulated analogous to DPS as follows.
\begin{align}
\nabla_{\mathbf{x}_t} \log p\left(\mathtt{l} \mid \mathbf{x}_t\right)  \simeq \nabla \log \mathbb{E}_{\mathbf{x}_0 \sim p\left(\mathbf{x}_0 \mid \mathbf{x}_t\right)}\left[p\left(y \mid \hat{\mathbf{x}}_0 \right)\right] \\
\nabla_{\mathbf{v}_t} \log p\left(\mathtt{l} \mid \mathbf{v}_t\right)  \simeq \nabla \log \mathbb{E}_{\mathbf{v}_0 \sim p\left(\mathbf{v}_0 \mid \mathbf{v}_t\right)}\left[p\left(y \mid \hat{\mathbf{v}}_0 \right)\right]
\end{align}

\section{Baseline Model for Evaluation}
\label{app:sec:experiment setup}
For a fair comparison, we categorize baseline models into two groups: "Only Generation Type" and "Optimization based Generation Type". Movevoer, we selected state-of-the-art generative models exhibiting competitive performance as representative models for the "Only Generation Type" category.
\\

\textbf{Only Generation Type}
\vspace{-3pt}
\begin{itemize}[leftmargin=1em]
\item \textbf{AR} \citep{luo20213d} leverages Markov chain Monte Carlo techniques to sequentially infer molecular structures from spatial atomic density representations.
\vspace{-3pt}
\item \textbf{Pocket2Mol} \citep{peng2022pocket2mol} incrementally synthesizes molecules through sequential prediction of atoms and associated bonds using an E(3)-equivariant architecture, selectively expanding frontier atoms to significantly improve sampling efficiency.
\vspace{-3pt}
\item \textbf{TargetDiff} \citep{guan20233d} enhances dual-modality diffusion methodologies, distinctly processing continuous and discrete modalities through parallel diffusion pipelines, demonstrating improved outcomes over purely continuous formulations such as DiffSBDD.
\vspace{-3pt}
\item \textbf{DecompDiff} \citep{guan2024decompdiff} adopts molecular decomposition techniques, separating the molecular structure into functional arms and connective scaffolds, thereby integrating chemically-informed priors within diffusion-based generative mechanisms.
\vspace{-3pt}
\item \textbf{MolCRAFT} \citep{qu2024molcraft} exploits Bayesian Flow Networks coupled with sophisticated sampling schemes, showing significant enhancements relative to contemporary diffusion-based methodologies. 
\textit{We directly utilized the publicly available molecules provided through their respective GitHub repositories for our experiments.}
\end{itemize}

\textbf{Optimization based Generation Type}
\vspace{-3pt}
\begin{itemize}[leftmargin=1em]
\item \textbf{RGA} \citep{fu2022reinforced} extends the evolutionary optimization approach of AutoGrow4 by integrating a reinforcement learning-based policy conditioned on target binding pockets, effectively constraining exploratory randomness during molecular search procedures.
\textit{Since the molecules generated by these models for CrossDocked2020 benchmark are not publicly available on GitHub, we trained these models ourselves to produce molecules for evaluation.}
\vspace{-3pt}
\item \textbf{DecompOpt} \citep{zhoudecompopt} employs conditional generative modeling of chemically meaningful fragments aligned with receptor sites, iteratively optimizing molecular generation via guided resampling within a structured diffusion latent space, informed by fragment-based oracle rankings.
\textit{Since the molecules generated by these models for CrossDocked2020 benchmark are not publicly available on GitHub, we trained these models ourselves to produce molecules for evaluation.}
\vspace{-3pt}
\item \textbf{TacoGFN} \citep{shentacogfn} leverages a G-FlowNet-based autoregressive approach, incrementally assembling molecular structures fragment-by-fragment while identifying key pharmacophoric interactions with target proteins. It integrates a reward-based optimization mechanism, simultaneously promoting advantageous properties such as binding affinity and synthetic accessibility (SA score) throughout the generative process.
\textit{We directly utilized the publicly available molecules provided through their respective GitHub repositories for our experiments.}
\vspace{-3pt}
\item \textbf{ALIDiff} \citep{gu2024aligning} is an SE(3)-equivariant diffusion generative model that incorporates recent reinforcement learning from human feedback techniques, leveraging Energy Preference Optimization to effectively generate molecules exhibiting superior properties such as enhanced binding affinity and other desirable attributes.
\textit{We directly utilized the publicly available molecules provided through their respective GitHub repositories for our experiments.}
\vspace{-3pt}
\item \textbf{TargetOpt} is a diffusion-based generative model developed specifically in this study to enable gradient-based guidance propagation, serving as a comparative baseline for evaluating the effectiveness of guidance mechanisms between BFN and diffusion-based frameworks. Although several studies have adopted similar optimization strategies, we exclude these approaches from consideration, as they either exclusively optimize for binding affinity or neglect guidance on categorical atom types.

\end{itemize}

\section{Additional Experiment Metrics}
\label{app:sec:experiment metric}
\subsection{Experimental Setup of Section \ref{subsec:rq1}}
We employ multiple metrics to comprehensively evaluate the binding affinity and intrinsic properties of the generated molecules. Specifically, we utilize SMINA, GNINA, and AutoDock Vina to independently calculate two distinct forms of binding affinity: (1) the intrinsic docking scores of the generated molecules themselves (denoted as Score.), and (2) the affinity scores obtained via re-docking procedures (denoted as Dock.). Additionally, we introduce the High Affinity metric, defined as the percentage of generated molecules exhibiting superior binding affinity compared to a given reference ligand for each target protein. To quantify the intrinsic molecular properties, we measure the Synthetic Accessibility (SA) and Diversity metrics. Finally, to capture the overall stability and validity of generated molecules (both intrinsically and in complex with the target protein) we utilize the PB-valid score from the PoseBusters benchmark. The metric PB-valid denotes the proportion of generated molecules considered valid under the PoseBusters benchmark, assuming that violation of any of its 17 evaluation criteria renders the molecule invalid.

\subsection{Experimental Setup of Section \ref{subsec:rq2}}
We quantitatively evaluate the synthetic feasibility of generated molecules using six key metrics provided by AiZynthFinder (Solved, Routes, Solved Routes, and Top Score). The reported values in Table 2 are averages computed across all generated molecules per model. Specifically, \underline{Solved} indicates whether AiZynthFinder successfully identified at least one valid retrosynthetic route for a given molecule. 
A higher number of nodes indicates a more extensive exploration of possible reaction pathways by the algorithm, which may reflect the inherent complexity of the target molecule, diversity in applicable reaction templates, or increased search depth. While a large node count can imply a thorough and comprehensive search, it might also signal inefficiencies if numerous unproductive pathways were evaluated. Thus, this metric provides valuable insights into the balance between computational effort and search comprehensiveness.
 
\underline{Routes} counts the total number of distinct retrosynthetic pathways (complete reaction sequences from purchasable precursors to the target molecule) identified by AiZynthFinder.
This metric quantifies the diversity of retrosynthetic solutions identified by the algorithm. A higher number of identified routes suggests multiple viable synthetic strategies for the target molecule, providing chemists with alternative synthetic options. Nevertheless, not all identified routes may be equally practical or feasible; thus, this metric should be interpreted alongside complementary measures such as the number of solved routes or the top-scoring pathways.

\underline{Solved Routes} is the subset of these routes comprising exclusively purchasable precursors listed in commercially available databases (e.g., ZINC), thus representing practically realizable synthetic pathways. 
This metric enables rapid assessment of synthetic feasibility given a predefined inventory of building blocks. Specifically, if at least one retrosynthetic pathway is successfully identified, the target molecule is considered synthesizable, making this a straightforward, high-level indicator of synthetic achievability. However, as it does not capture pathway quality or diversity, it should be interpreted in conjunction with complementary metrics.

Lastly, \underline{Top Score} reflects the highest-ranked synthetic route as evaluated by AiZynthFinder’s scoring function, which aggregates criteria such as precursor availability, reaction step count, and reaction feasibility (e.g., average template frequency).
This metric quantitatively represents the quality of the highest-ranked synthetic route, assisting chemists in prioritizing retrosynthetic pathways for consideration. A higher score reflects routes with greater feasibility, efficiency, and desirability. This measure is particularly useful for comparing alternative pathways or selecting the most promising candidates for subsequent experimental validation.


\section{Additional Experiment result}
\label{app:sec:experiment result}
\subsection{Experimental Analysis of Section \ref{subsec:rq1}}
Our proposed model consistently outperforms baseline methods across 12 evaluation metrics covering binding affinity and intrinsic molecular properties, as shown Table \ref{table:main}. In particular, our model significantly outperforms baseline methods in the 'Score' metric, which measures the binding affinity of generated molecules prior to any docking procedure. This result indicates that our model inherently generates molecules possessing high binding affinity, even without additional docking optimization. Similar superior performance is observed in the Synthetic Accessibility (SA) and PoseBusters validity (PB-Valid) metrics, indicating that gradients derived from binding affinity and SA scores are effectively propagated during the guidance-based sampling process.

Furthermore, we observe that different baseline models excel depending on the affinity evaluation tool used; specifically, ALIDiff outperforms DecompDiff when evaluated by SMINA, whereas DecompDiff achieves better results when GNINA is used. Considering the variability in performance across different evaluation metrics, the consistently strong performance of our proposed model across all three docking tools (Vina, SMINA, and GNINA) highlights its robustness and reliability in generating molecules with high binding affinity, as well as its generalizable efficacy across diverse evaluation standards.
Additionally, our model uniquely exhibits minimal differences in binding affinity between pre-docking and post-docking evaluations. This minimal discrepancy indicates our model's ability to intrinsically predict stable and energetically favorable binding poses, explicitly capturing meaningful protein-ligand interactions. 

Table \ref{table:appen_main} demonstrates the effectiveness of jointly applying gradient-based guidance to both atomic coordinates and atom types. As clearly indicated by the results, simultaneous guidance across both modalities consistently outperforms methods employing guidance on a single modality alone. This outcome aligns with fundamental chemical principles, as molecular properties and the corresponding energy landscape inherently depend upon intricate interactions between atomic types and their spatial configurations.

In addition to the results presented in Figure \ref{fig:strain_energy}, we conducted supplementary experiments using the PoseCheck benchmark to further assess the generated molecules' structural stability and validity. Crucially, molecules evaluated in this additional experiment were directly sampled from generative models, without employing  docking. This methodological choice ensures that the evaluation reflects the inherent capability of the generative models, rather than improvements arising from docking-based optimizations or adjustments by docking tools.
Their reliance on external docking for generating final 3D conformations makes it unsuitable to accurately evaluate these models from the perspective of generating intrinsically stable 3D molecular structures.
Consequently, models such as RGA and TacoGFN, which initially generate molecules as SMILES strings or 2D graphs and subsequently rely on docking software to derive the final 3D conformations, were excluded from this comparative analysis. 

Experimental results (Table \ref{table:appen_posecheck}) indicate that the CByG model outperforms baseline models in terms of the "Clash" and "Strain Energy" metrics, whereas DecompOpt achieves the best performance in the "Intermolecular Interaction" metric. Considering that DecompOpt explicitly optimizes molecules by fixing protein-interacting fragments, this result aligns naturally with its optimization strategy.

\begin{table}[!t]
\caption{Summary of binding affinity and molecular properties of reference molecules and molecules generated by \method and baselines. (↑)/(↓) denotes whether a larger/smaller number is preferred. Top 2 results are bolded and underlined, respectively.}
\label{table:appen_main}
\centering
\small
{\resizebox{\textwidth}{!}{
\begin{tabular}{c|cc|cc|cc|cc|cc}
\toprule
{\textbf{Methods}} & \multicolumn{2}{c|}{\textbf{SMINA (↓)}} & \multicolumn{2}{c|}{\textbf{GNINA (↓)}} & \multicolumn{2}{c|}{\textbf{Vina (↓)}} & \multicolumn{2}{c|}{\textbf{SA (↑)}} & \multicolumn{2}{c}{\textbf{PB-Valid (↑)}}   \\
{}  & Score. & Dock. & Score. & Dock. & Score. & Dock. & Avg. & Med. & Avg. & Med.\\
\midrule
{Reference} & -6.37 & -7.92 & -7.06 & -7.61 & -6.36 & -7.45 & 0.73 & 0.74 & 95.0\% & 95.0\%   \\
\midrule
\textbf{\method}& \textbf{-7.74} & \textbf{-9.61} & \textbf{-7.63} & \textbf{-8.33} & \textbf{-8.60} & \textbf{-9.16} & \textbf{0.84} & \textbf{0.87} & \textbf{94.9\%} & \textbf{96.0\%} \\
\method { \texttt{w/o pos guidance}} & -7.05 & -8.60 & -6.88 & -7.42 & -7.74 & -8.12 & 0.78 & 0.79 & 90.4\% & 91.1\%\\
\method { \texttt{w/o type guidance}} & -6.92 & -8.45 & -6.70 & -7.20 & -7.60 & -7.95 & 0.76 & 0.77 & 88.3\% & 89.0\%\\
\method { \texttt{w/o uncertainty}} & -7.25 & -8.81 & -7.10 & -7.64 & -7.90 & -8.31 & 0.75 & 0.76 & 87.1\% & 87.7\%\\
\midrule
\textbf{\method} ($\lambda_{\mathbf{x}}=40, \lambda_{\mathbf{v}}=40$)& \textbf{-7.74} & \textbf{-9.61} & \textbf{-7.63} & \textbf{-8.33} & \textbf{-8.60} & \textbf{-9.16} & \textbf{0.84} & \textbf{0.87} & \textbf{94.9\%} & \textbf{96.0\%}\\
{\method} ($\lambda_{\mathbf{x}}=30, \lambda_{\mathbf{v}}=30$)& -7.13 & -8.64 & -6.95 & -7.47 & -7.79 & -8.17 & 0.79 & 0.81 & 91.0\% & 91.4\%\\
{\method} ($\lambda_{\mathbf{x}}=50, \lambda_{\mathbf{v}}=50$)& -7.33 & -8.94 & -7.22 & -7.79 & -8.07 & -8.49 & 0.74 & 0.76 & 85.6\% & 86.2\%\\
{\method} ($\lambda_{\mathbf{x}}=30, \lambda_{\mathbf{v}}=40$)& -7.28 & -8.89 & -7.15 & -7.68 & -7.98 & -8.42 & 0.77 & 0.78 & 89.2\% & 89.7\%\\
{\method} ($\lambda_{\mathbf{x}}=40, \lambda_{\mathbf{v}}=30$)& -7.10 & -8.59 & -6.92 & -7.42 & -7.76 & -8.10 & 0.80 & 0.82 & 91.8\% & 92.3\%\\
\bottomrule
\end{tabular}
}}
\end{table}

\begin{table}[!t]
\vspace{-13pt}
\centering
\caption{Posebusters results for all methods.}
\label{table:appen_posecheck}
{\resizebox{0.99\linewidth}{!}{
\begin{tabular}{l|c|c|c|c|c|c}
\toprule
& {\textbf{\method}} & \multicolumn{1}{c|}{\text{TargetDiff}} & \multicolumn{1}{c|}{\text{DecompDiff}} & \multicolumn{1}{c|}{\text{MolCraft}}   & \multicolumn{1}{c|}{\text{DecompOpt}} & \multicolumn{1}{c}{\text{ALIDiff}}  \\
\midrule 

\text{Avg. Clash (↓)}         & \textbf{3.71} & 10.54 & 13.66 & 5.72 & 17.05 & 8.71\\
\text{Avg. Strain Energy (↓)} 
& \textbf{5.85$\mathbf{\times 10^{7}}$} 
& 1.41$\times 10^{14}$ 
& 1.44$\times 10^{9}$ 
& 7.57$\times 10^{11}$ 
& 1.18$\times 10^{11}$  & 
6.22$\times 10^{16}$\\
\text{Avg. Interaction (↑)}   & 15.65 & 17.15 & 18.26 & 15.92 & \textbf{18.77} & 17.74\\
\bottomrule
\end{tabular}
}}
\vspace{-7pt}
\end{table}

\subsection{Experimental Analysis of Section \ref{subsec:rq2}}
In this experiment, both our proposed model and the RGA model exhibited overall high performance. Here, it is important to consider molecular complexity in relation to binding affinity. Typically, molecules with greater complexity tend to possess higher binding affinity due to increased opportunities for intermolecular interactions with target proteins; conversely, simpler molecules usually exhibit lower affinity. Given this, the performance of the RGA model on the AiZynthFinder benchmark aligns logically with its relatively lower binding affinity scores reported in Table 1. Applying the same perspective to our proposed model, it is particularly noteworthy that our model not only achieves top-tier performance in binding affinity but also exhibits near state-of-the-art results on the AiZynthFinder benchmark. This indicates the ability of our approach to generate molecules that are simultaneously effective in terms of biological efficacy and practical retrosynthetic feasibility.

Interestingly, we observe that several models show no clear correlation between their performance on the AiZynthFinder benchmark and the SA score reported in Table \ref{table:main}. This highlights the necessity for evaluating synthetic feasibility in SBDD research using multiple diverse criteria beyond just the SA score. A notable concern is that despite high SA scores (approaching 0.8 for RGA and surpassing 0.8 for our proposed model) the fraction of molecules classified as synthetically feasible under the AiZynthFinder 'Solved' metric remains below 50\%. This outcome suggests that synthetic feasibility deserves greater attention in future SBDD research, underscoring the need for broader consideration and deeper analysis of retrosynthetic practicality.

\subsection{Experimental Analysis of Section \ref{subsec:rq3}}
As demonstrated in Figure \ref{fig:guidance_score}, guidance scores obtained using the BFN-based approach consistently surpass those derived from diffusion-based methods across the entire generative trajectory. Furthermore, guidance scores from diffusion models utilizing predictions of the final clean state ($\hat{\mathbf{x}}_0$) exhibit marked instability, underscoring the robustness of the BFN-guided generation procedure. Notably, diffusion-based methods (green and gray plots) yield higher absolute guidance scores when employing predictions of the final molecular states; however, these scores simultaneously exhibit increased variance as the generative process advances. This behavior highlights a fundamental trade-off within diffusion-based guidance strategies in 3D molecular generation: gradients derived from predicted clean states facilitate higher guidance scores but necessitate point estimation toward the final state in the sample space, inherently introducing instability into intermediate guidance steps. In contrast, BFN operates in a continuous parameter space rather than directly in sample space, enabling stable and continuous gradient propagation even for categorical variables. Consequently, BFN-based guidance using predicted final states provides inherently more stable gradient trajectories, making it particularly advantageous for robust and controllable 3D molecular generation.

\subsection{Experimental Analysis of Section \ref{subsec:rq4}}
To evaluate the selectivity control capabilities of the proposed \method model, we conducted experiments using the selectivity benchmark set specifically constructed for this study. We primarily assessed and compared selectivity performance before and after applying guidance in two generative model categories: diffusion-based models and Bayesian Flow Network-based models. Notably, optimization-based generation models were excluded from this comparison due to their intrinsic requirement for retraining to optimize for different molecular properties, highlighting the versatility of our proposed model in addressing diverse property optimization objectives.

Molecule generation was directed toward enhancing binding affinity to designated on-target proteins, while guidance was explicitly designed to minimize binding affinity to specified off-target proteins. In Table 3, the "Succ.Rate" represents the proportion of generated molecules demonstrating superior affinity for the on-target protein relative to the off-target, whereas the "$\Delta$ Score" quantifies the differential affinity between on-target and off-target interactions.

Experimental results revealed that even without explicit selectivity-guidance, both model categories produced molecules with superior affinity toward the on-target protein in more than half of the generated cases. This outcome can be attributed to the inherent advantage of structure-based generative models, which explicitly encode and leverage the structural context of the target proteins during molecule generation. Nevertheless, Bayesian Flow Network-based models consistently demonstrated superior performance compared to diffusion-based models, and this advantage was markedly amplified when selectivity guidance was employed. These findings collectively underscore the efficacy and versatility of the proposed \method framework in achieving controlled generation not only for conventional metrics such as synthetic accessibility (SA score) but also for critical properties such as selectivity.

\subsection{Comparative Visualization of Generated Ligands}

\begin{figure}[!h]
    \centering
    \includegraphics[width=\linewidth]{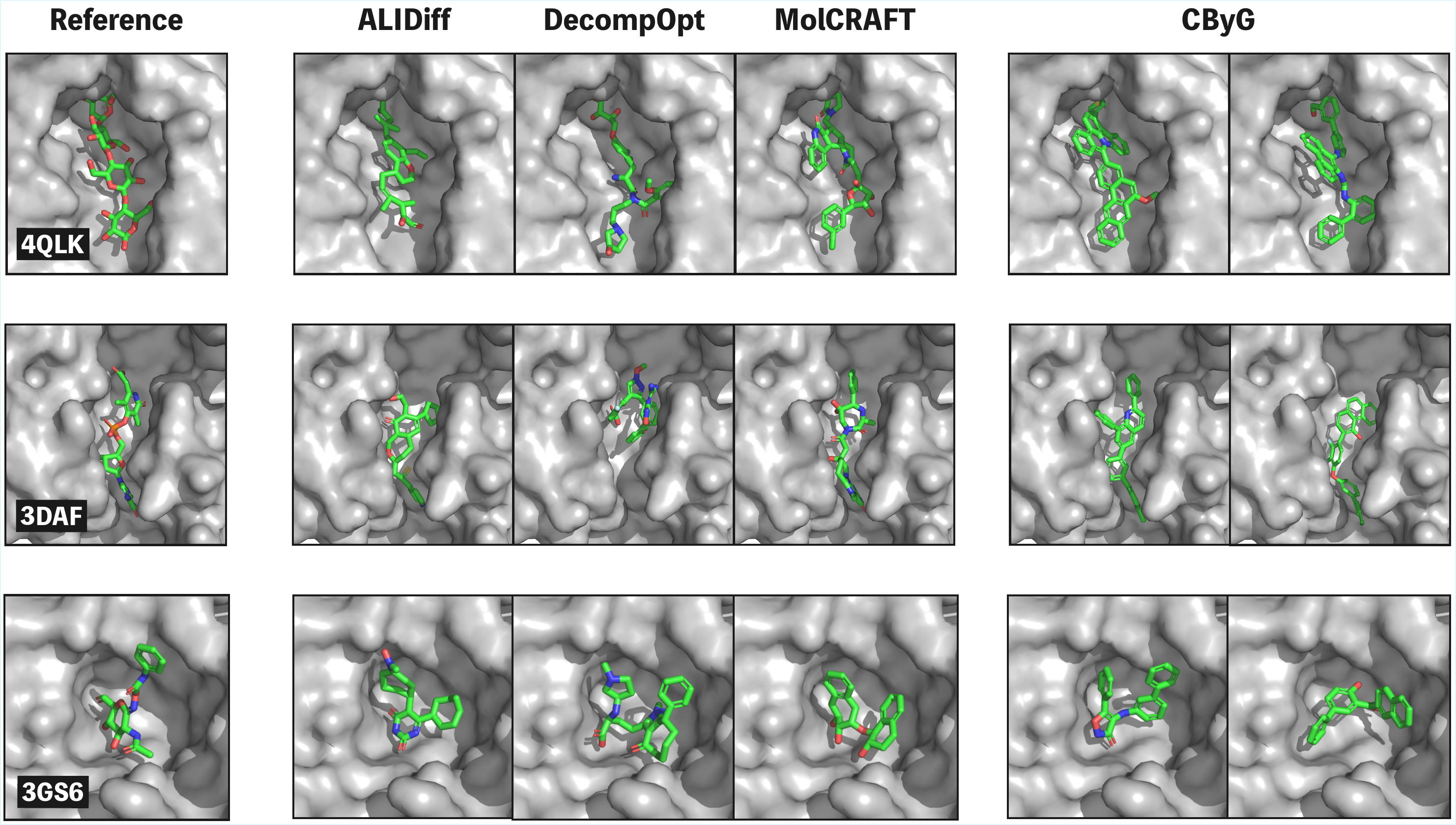}
    \caption{Visualization of generated ligands for protein pockets, with a reference molecule (left) and corresponding outputs from ALIDiff, DecompOpt, MolCRAFT, and \method.}
    \vspace{-8pt}
\end{figure}

\section{Selectivity Dataset}
\label{app:sec:Selectivity_data}
We constructed a selectivity-focused dataset based on kinase inhibitor selectivity data. Initially, we identified the selectivity profiles of 285 proteins across 38 kinase inhibitors, as reported in a study on the quantitative analysis of kinase inhibitor selectivity \citep{karaman2008quantitative,davis2011comprehensive}. Subsequently, for each inhibitor, we categorized the proteins into on-target and off-target groups and extracted their corresponding Entrez Gene Symbols. These gene symbols were then used to systematically gather protein-related data from the UniProt \citep{uniprot2019uniprot} database via REST APIs.
The UniProt web crawling process was structured in three stages. First, the Entrez Gene Symbols were URL-encoded and filtered by the human taxonomy ID (9606) to retrieve corresponding UniProt IDs in JSON format. Second, protein sequences were obtained by querying the UniProt FASTA API, with FASTA headers subsequently removed. Third, ATP binding site information—including binding site positions and sequences—was extracted from the UniProt feature sections. The collected data, comprising UniProt IDs, sequences, and binding site details, were integrated using the initial set of 285 Entrez Gene Symbols as a reference. Proteins lacking ATP binding site data (six in total) were excluded, yielding a final dataset of 279 proteins prepared for further analysis.
Protein structures were predicted using AlphaFold3 \citep{abramson2024accurate}, and model structure files were retrieved in CIF format. These files were converted to PDB format using the Bio.PDB module of Biopython. The ATP binding site information was then employed to define and extract protein pocket structures, specifically targeting the binding site residues and the surrounding region within a 5Å radius.
Structural similarity among the protein pockets was evaluated using TM-score and RMSD metrics. The TM-score quantifies topological similarity between protein structures, with values ranging from (0, 1]; scores above 0.5 typically indicate identical protein folds, while scores below 0.17 suggest unrelated structures. We classified and organized the extracted protein structures into directories based on a TM-score threshold of 0.4 and an RMSD of 1Å, reflecting structurally similar protein pockets suitable for downstream selectivity analyses. This process ultimately facilitated the construction of on-target (primary) and off-target pairs.
\section{Rethinking}
\label{app:sec:Rethinking}
\subsection{Addressing Fundamental Challenges of Diffusion-based Guidance in 3D Molecular Generation}
A core objective in Structure-based Drug Design (SBDD) is generating molecules that bind specifically to target proteins while simultaneously satisfying desired properties. Diffusion-based generative models are particularly suited to this task, as they can incorporate external predictors for property-guided sampling. Specifically, these models leverage guidance derived from predictors to direct the generative process toward property-specific regions of molecular space. However, as briefly mentioned in previous sections, 3D molecular structures inherently comprise hybrid data types, consisting of continuous variables (e.g., Cartesian coordinates) typically modeled by Gaussian distributions, and categorical variables (e.g., atom types such as oxygen or nitrogen) typically represented by categorical distributions. This hybrid nature presents fundamental challenges for conventional gradient-based guidance approaches.

First, since coordinates and atom types are sampled from fundamentally distinct distributions, guidance gradients tend to propagate independently across these data types. Consequently, the guidance mechanism often fails to accurately capture the critical chemical interdependencies between atomic coordinates and categorical atom identities, thereby undermining the chemical coherence of the generated molecules.

Second, categorical variables in diffusion models rely on discrete sampling processes involving an argmax operation at each reverse sampling step. Due to the discrete nature of argmax operations, direct application of gradient-based guidance becomes infeasible, as minor guidance gradients typically do not influence the argmax outcome unless excessively amplified. Yet, increasing the guidance scale excessively can cause the distribution during reverse sampling to become dominated by guidance gradients, resulting in unstable and unrealistic molecular structures. Alternatively, attempting to circumvent this issue by artificially converting categorical distributions into continuous or discretized variables introduces unnatural assumptions and significantly increases the complexity of model design.

Lastly, injecting guidance gradients directly into the denoising process, which operates in the molecular sample space, risks destabilizing intermediate molecular configurations. This instability arises due to the numerical sensitivity of 3D coordinates, potentially causing molecules to lose chemical validity and structural coherence during intermediate generation stages. Thus, this approach substantially hinders effective controllability over molecular properties, and leads to unstable molecular outcomes.

In the context of 3D molecules (with continuous coordinates and discrete atom types), diffusion-based generative guidance methods face several fundamental limitations.

First, guidance gradients often fail to capture interdependencies between modalities (e.g. coordinate updates and atom-type assignments may be misaligned if treated separately) as evidenced by the need for separate latent spaces or noise schedules for different variable types in prior diffusion approaches. Second, guidance in categorical diffusion is unstable and often ineffective: choosing atom types via an argmax during the denoising process introduces a discontinuous, non-differentiable operation that disrupts gradient-based optimization. Third, the denoising of spatial coordinates is structurally fragile – adding noise to atomic positions can break chemical bonds or distort interatomic distances beyond physical limits, leaving intermediate states chemically invalid and uninformative. Given these challenges, BFN offer a promising alternative for property-guided molecular generation. BFN operate in a fully differentiable parameter space and provide a unified probabilistic treatment of continuous and categorical modalities, thereby inherently modeling cross-modal dependencies and avoiding the need for modality-specific hacks. In contrast to diffusion, BFN do not require per-step argmax sampling for atom types; instead, they maintain a probability simplex representation for categorical variables, preserving gradient information throughout the generative process. This unified and differentiable approach enables stable gradient-based guidance on molecular properties, making BFN a robust paradigm for 3D molecule generation under complex hybrid objectives.
\subsection{Rethinking Posterior Guidance: From Intermediate States to Predicted Final Structures}
In generative frameworks such as diffusion models, conditional generation typically involves controlling the generative process by leveraging gradient guidance from a posterior conditioned on labels (attributes) $\mathtt{l}$. According to the theoretical foundations of the reverse process, the introduced posterior term can be represented as $\nabla_{\mathbf{x}_t} p(\mathtt{l}\mid \mathbf{x}_t)$, commonly known as the conditional score function, which is usually learned using a dedicated neural network. Here, it is crucial to reassess whether the intermediate state $x_t$, employed as input for the conditional score function, is a sensible variable for attribute prediction \citep{dhariwal2021diffusion,song2020score,guo2024gradient}. In domains like image generation, where score-based diffusion models were initially introduced, intrinsic structural characteristics of the data enable meaningful predictions of labels even from intermediate noisy states. Thus, utilizing the conditional score function in the form $p(\mathtt{l}\mid \mathbf{x}_t)$ has proven reasonable in these contexts.

However, unlike image data, intermediate states of 3D molecular structures with added noise lose their chemical validity, rendering derived molecular properties essentially meaningless. Consequently, employing a posterior conditioned directly on the intermediate noisy state $\mathbf{x}_t$, i.e., $p(\mathtt{l}\mid \mathbf{x}_t)$, is fundamentally unreasonable as guidance for molecule generation tasks. To overcome this limitation, recent studies have adopted a posterior sampling strategy, originally proposed in inverse problems within the image generation domain \citep{chungdiffusion,han2023training}. Specifically, these methods predict the final, noise-free molecular structure $x_0$ and leverage this prediction to guide gradient-based generation, i.e., $p(\mathtt{l}\mid \hat{\mathbf{x}}_0)$. Further efforts have also extended such posterior sampling approaches to the conditional generation of 3D molecules. However, existing methods primarily focus on general conditional molecular generation rather than the specialized task of structure-based drug design (SBDD), which involves molecular binding to specific proteins. Therefore, substantial modifications and further methodological advancements are necessary for applying these approaches to the SBDD task. Notably, existing frameworks discretize categorical variables representing atom types into continuous representations, which is inherently unnatural given the data's discrete characteristics.

In summary, predicting the posterior for the final molecular structure $x_0$ and subsequently using it for calculating guidance gradients is more principled in the context of SBDD tasks. We propose that this principle is broadly applicable across generative modeling frameworks, including not only diffusion models but also BFN. 

\subsection{Limitations of Current Evaluation Methods}
In previous research on structure-based drug design (SBDD), evaluating the binding affinity between generated molecules and their target proteins has been a common practice to assess model performance. Most studies traditionally relied heavily on AutoDock Vina to measure binding affinity. Although AutoDock Vina provides three distinct scoring metrics, these metrics inherently depend on the same underlying computational algorithm, potentially introducing bias due to reliance on a single scoring method. To enhance the generalizability and reliability of affinity assessments, incorporating multiple docking algorithms in the evaluation process is necessary. Accordingly, in Section 6.2, we present a detailed experimental setup employing several docking tools, including AutoDock Vina, to enable a more comprehensive and robust evaluation of generative model performance.

Furthermore, previous SBDD research has commonly utilized the Synthetic Accessibility (SA) score to evaluate the synthetic feasibility of generated molecules. The SA score quantitatively integrates chemical fragment contributions and structural complexity penalties into a single metric ranging between 0 and 1, with higher scores indicating greater synthetic accessibility. However, molecules possessing very high SA scores (e.g., greater than 0.9) frequently lack viable retrosynthetic pathways, making their actual synthesis infeasible. Regardless of a molecule's theoretical efficacy, its practical value is severely limited without an achievable synthetic route. Therefore, rigorously assessing realistic synthetic accessibility is critical, although research addressing this aspect has been relatively limited. Recognizing the importance of this issue, we introduce the AiZynthFinder benchmark, an evaluation method for retrosynthetic analysis based on practically available chemical building blocks. 

From the viewpoint of practical drug development, selectivity is equally important as binding affinity for identifying promising drug candidates. Selectivity refers to the ability of a candidate molecule to specifically bind to its intended target protein without significant interactions with off-target proteins. Molecules lacking sufficient selectivity may interact with unintended proteins, potentially causing side effects or adverse reactions, thereby reducing or negating the desired pharmacological effects. Recently, selectivity has received increased attention in the field of 3D molecular generation, and several diffusion-based guidance strategies have been proposed to address this requirement.

However, existing selectivity-focused strategies typically require prior training of classifiers that distinguish between positive (binding) and negative (non-binding) protein-ligand pairs. Furthermore, the CrossDocked2020 dataset, commonly used in docking studies, was not originally constructed for selectivity evaluations. Thus, leveraging this dataset for selectivity assessments necessitates extensive additional docking computations. Moreover, the absence of clear criteria for identifying true binding molecules and the significantly greater number of false binding molecules relative to true binding molecules pose substantial challenges for obtaining generalizable guidance signals. Consequently, deriving selectivity metrics based solely on the CrossDocked2020 dataset inherently risks bias due to these intrinsic dataset limitations. Most critically, the CrossDocked dataset may not adequately reflect biologically meaningful selectivity, limiting its practical utility for reliable selectivity assessment. Therefore, establishing rigorous, standardized benchmark datasets capable of objectively evaluating selectivity is essential. Additionally, there is an urgent need to develop novel, efficient controllable generation strategies capable of effectively ensuring molecular selectivity.

\newpage

\paragraph{Justification for Applying Tweedie’s Formula in Continuous Variables}

In our framework, the sender distribution for continuous variables is explicitly modeled as a Gaussian:
\[
p_{\mathtt{S}}(y \mid x; \alpha) = \mathcal{N}(y \mid x, \alpha^{-1} I)
\]
Given a prior over $x$ (possibly Gaussian), the Bayesian update for the parameter $\theta$ given a noisy observation $y$ from the sender is:

\[
p(x \mid y) \propto p_{\mathtt{S}}(y \mid x; \alpha) \cdot p(x)
\]

If we assume the prior $p(x) = \mathcal{N}(x \mid \mu_0, \Sigma_0)$, then the posterior $p(x \mid y)$ is also Gaussian, due to conjugacy:
\[
p(x \mid y) = \mathcal{N}\left(x \mid \mu_{\text{post}}, \Sigma_{\text{post}}\right)
\]
where
\[
\Sigma_{\text{post}}^{-1} = \Sigma_0^{-1} + \alpha I,\quad
\mu_{\text{post}} = \Sigma_{\text{post}}\left(\Sigma_0^{-1}\mu_0 + \alpha y\right)
\]

Thus, the Bayesian update function for the mean parameter $\theta$ becomes a linear function of the previous mean and the new observation $y$:
\[
\theta_{i+1} = \frac{\rho_{i-1}}{\rho_i}\theta_i + \frac{\alpha_i}{\rho_i}y_i
\]
with $\rho_i = \rho_{i-1} + \alpha_i$, which matches the classic Kalman filter update for Gaussian models.

\paragraph{Conclusion:}
Since the posterior is exactly Gaussian, Tweedie’s formula
\[
\mathbb{E}[x \mid y] = y + \alpha^{-1}\nabla_y \log p(y)
\]
is strictly valid, and our use of Tweedie’s formula to obtain a gradient-based update is mathematically justified.

\paragraph{Justification for Applying Tweedie’s Formula in Categorical Variables}

For categorical variables, the latent class indicator $e_x$ is originally a one-hot vector, i.e., $e_x \in \{0,1\}^K$ with $\sum_{k=1}^K e_x^{(k)}=1$. However, in our framework, we reparameterize $e_x$ to a continuous relaxation (e.g., using the Gumbel-softmax trick), and the sender distribution is modeled as a multivariate Gaussian over this relaxed variable:
\[
p_{\mathtt{S}}(y \mid e_x; \alpha) = \mathcal{N}\left(y \mid \alpha (K e_x - \mathbf{1}), \alpha K I\right)
\]
where $y$ is a continuous vector and $e_x$ is now allowed to be a point in the probability simplex.

Given this sender structure, the "pseudo-posterior" $p(e_x \mid y)$ (formally, the conditional distribution in the continuous relaxation) is also a multivariate Gaussian, due to the properties of conjugacy between Gaussians.

Thus, the Bayesian update for the categorical parameter $\theta$ (in the reparameterized space) can be written as a function of $y$ and the previous parameter:
\[
e_{x,i+1} = \frac{\rho_{i-1}}{\rho_i} e_{x,i} + \frac{\alpha_i}{\rho_i} y_i
\]
with $\rho_i = \rho_{i-1} + \alpha_i$, which is the direct analogue of the update for the continuous case.

\paragraph{Applicability of Tweedie’s Formula:}
Since both $y$ and $e_x$ are continuous and (conditionally) Gaussian under the sender, Tweedie’s formula is applicable:
\[
\mathbb{E}[e_x \mid y] = y + (\alpha K)^{-1} \nabla_y \log p(y)
\]
and, by chain rule,
\[
\nabla_{e_x} \log p(e_x) = J_{e_x \to y}^\top \nabla_y \log p(y)
\]
where $J_{e_x \to y}$ is the Jacobian matrix of the sender’s mapping.

\paragraph{Conclusion:}
By reparameterizing the categorical variable into a continuous, sender-Gaussianized latent variable, all required conditions for Tweedie’s formula (Gaussianity and differentiability) are satisfied in the update step. This mathematically justifies the use of Tweedie’s formula for gradient-based guidance in categorical settings, just as in the continuous case.

\newpage

\end{document}